\def\norm#1{\left\|#1\right\|}
\def\abs#1{\left|#1\right|}
\newcommand*\diff{\mathop{}\!\mathrm{d}}
\def\ie{\emph{i.e}.\xspace}
\newcommand{\RR}{\mathbb{R}}
\newcommand{\Vv}{\mathcal{V}}
\newcommand{\Ee}{\mathcal{E}}
\newcommand{\Gg}{\mathcal{G}}
\newcommand{\Xx}{\mathcal{X}}
\newcommand{\Ss}{\mathcal{S}}
\newcommand{\Nn}{\mathcal{N}}
\newcommand{\Hh}{\mathcal{H}}
\newcommand{\topM}{{\top_M}}
\newcommand{\topmu}{{\top_\mu}}
\theoremstyle{plain}
\newtheorem{theorem}{Theorem}[section]
\newtheorem{proposition}[theorem]{Proposition}
\theoremstyle{definition}
\newtheorem{definition}[theorem]{Definition}
\theoremstyle{remark}
\icmltitlerunning{Sinkhorn Normalization of Diffusion Kernels}
\begin{document}

\twocolumn[
  \icmltitle{Sinkhorn Normalization of Diffusion Kernels}



  \icmlsetsymbol{equal}{*}

  \begin{icmlauthorlist}
    \icmlauthor{Nathan Kessler}{equal,ENS,heka}
    \icmlauthor{Robin Magnet}{equal,heka}
    \icmlauthor{Jean Feydy}{heka}
  \end{icmlauthorlist}

  \icmlaffiliation{ENS}{Centre Borelli, ENS Paris-Saclay, France}
  \icmlaffiliation{heka}{Inria, Université Paris Cité, Inserm, HeKA, F-75015 Paris, France}
  \icmlcorrespondingauthor{Robin Magnet}{robin.magnet@inria.fr}
  \icmlcorrespondingauthor{Nathan Kessler}{nathan.kessler@ens-paris-saclay.fr}

  \icmlkeywords{Machine Learning, ICML, Diffusion, Laplacian, shape processing}

  \vskip 0.3in
]



\printAffiliationsAndNotice{\icmlEqualContribution}

\begin{abstract}
  Smoothing a signal based on local neighborhoods is a core operation in machine learning and geometry processing. On well-structured domains such as vector spaces and manifolds, the Laplace operator derived from differential geometry offers a principled approach to smoothing via heat diffusion, with strong theoretical guarantees. However, constructing such Laplacians requires a carefully defined domain structure, which is not always available. Most practitioners thus rely on simple convolution kernels and message-passing layers, which are biased against the boundaries of the domain.
We bridge this gap by introducing a broad class of \emph{smoothing operators}, derived from general similarity or adjacency matrices, and demonstrate that they can be normalized into \emph{diffusion-like operators} that inherit desirable properties from Laplacians. Our approach relies on a symmetric variant of the Sinkhorn algorithm, which rescales positive smoothing operators to match the structural behavior of heat diffusion.
This construction enables Laplacian-like smoothing and processing of irregular data such as point clouds, sparse voxel grids or mixture of Gaussians. We show that the resulting operators not only approximate heat diffusion but also retain spectral information from the Laplacian itself, with applications to shape analysis and matching. Code is available at \href{https://github.com/RobinMagnet/SinkhornKernels}{github.com/RobinMagnet/SinkhornKernels}
\end{abstract}

\section{Introduction}\label{sec:introduction}

\begin{figure*}
  \centering
  \includegraphics[width=\textwidth]{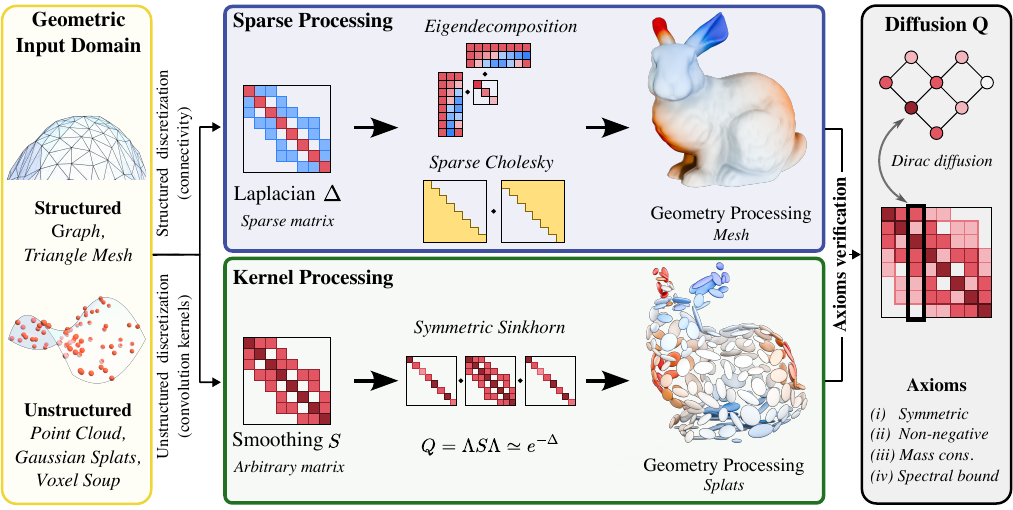}

  \caption{\label{fig:Teaser}
    Turning smoothing operators into heat-like diffusion operators.
    Heat diffusion is a fundamental operation on geometric domains, but standard approaches only apply to \emph{structured inputs} that provide explicit connectivity, used to assemble a sparse Laplacian $\Delta$. Heat diffusion is then obtained using sparse factorization or truncated eigendecomposition. \emph{Unstructured} representations, such as Gaussian splats or point clouds have no natural Laplacian, but admit cheap smoothing (or convolutions) kernels. We show that an \emph{arbitrary} smoothing operator $S$ can be corrected into a diffusion operator $Q$ via symmetric Sinkhorn normalization, without any connectivity or factorization. The resulting operator shares key practical axioms of diffusion, notably mass preservation of the signal.
}

\end{figure*}

\paragraph{Discrete Differential Geometry.}
Geometric data analysis provides a principled framework for understanding complex data~\cite{gallotRiemannianGeometry2004,bronsteinGeometricDeepLearning2021}. These tools work particularly well in two or three dimensions, though they naturally extend to graphs and higher-dimensional domains. While differential geometry provides elegant constructions on well-structured data like triangle meshes~\cite{crane2018discrete, botschPolygonMeshProcessing2010}, adapting these tools to less structured representations like point clouds or voxel grids remains a major challenge~\cite{barillFastWindingNumbers2018,lachaudLightweightCurvatureEstimation2023,fengHeatMethodGeneralized2024}.

High-quality triangular meshes are often unavailable in practice, due to the acquisition process~\cite{bogoDynamicFAUSTRegistering2017} or the nature of the data itself~\cite{marcusOpenAccessSeries2007}. For example, medical imaging relies on voxelized segmentation masks~\cite{marcusOpenAccessSeries2007}, and anatomical structures such as trabecular bones cannot be easily described as surfaces. Point clouds and recent representations like Gaussian splats~\cite{kerbl3DGaussianSplatting2023, zhouLaplaceBeltramiOperatorGaussian2025} are now popular, but lack explicit connectivity. Applying classical geometric operators here usually require local approximations of the underlying manifold~\cite{sharpLaplacianNonmanifoldTriangle2020,belkinConstructingLaplaceOperator2009, zhouLaplaceBeltramiOperatorGaussian2025}, which introduce errors and degrade performance.

\paragraph{Smoothing Operations.}
Transferring geometry-aware methods designed for clean meshes to general unstructured data representations is essential for scalable learning with shapes. Among the most fundamental operations in geometry processing is \emph{smoothing}, which modifies a signal using local geometric information~\cite{taubinCurveSurfaceSmoothing1995, sharpDiffusionNetDiscretizationAgnostic2022}. A canonical example is \emph{heat diffusion}, controlled on meshes by the matrix exponential of the Laplacian~\cite{gallotRiemannianGeometry2004}. This process is central to many pipelines for shape analysis~\cite{craneHeatMethodDistance2017}, segmentation~\cite{sharpDiffusionNetDiscretizationAgnostic2022}, and correspondence~\cite{sunConciseProvablyInformative2009}.

\paragraph{Contributions.}
In this work, we develop stable smoothing operators acting at a \emph{fixed scale} on unstructured discrete data.
To do so, we generalize heat diffusion to arbitrary discrete domains, including point clouds, voxel grids, Gaussian mixtures and binary masks. 
Given any symmetric similarity matrix, our method produces a heat-diffusion-like operator via \emph{symmetric Sinkhorn normalization}~\cite{knightSymmetryPreservingAlgorithm2014}. The resulting linear operator is symmetric with respect to a mass-weighted inner product, and its spectrum approximates the exponential of a Laplacian, as illustrated in~\Cref{fig:Teaser}. This approach notably guarantees \emph{mass conservation}, meaning that the total sum of the signal if preserved under diffusion, corresponding to physical conservation of heat over the domain.

Our contributions can be summarized as follows:
\begin{itemize}[nosep, leftmargin=1em]
\item We present an algebraic framework for defining smoothing, Laplacians, and diffusions on general discrete geometric representations, clarifying their shared structure and assumptions.
\item We propose an efficient and GPU-friendly algorithm to transform \emph{arbitrary} similarity matrices into mass-preserving diffusion operators. While inspired by recent theoretical works in manifold learning \cite{wormell2021spectral,chengBistochasticallyNormalizedGraph2024}, we focus on stability at \emph{fixed diffusion scale}, instead of asymptotic limits.
\item We demonstrate broad applicability to geometric data analysis, from spectral shape analysis and generative modelling to state of the art shape correspondence on unstructured data.
\end{itemize}

\section{Related Works}

\paragraph{Laplacians and Heat Diffusions.} 
The Laplace–Beltrami operator $\Delta$ is essential in discrete geometry processing on triangle meshes~\cite{sorkineLaplacianMeshProcessing2005, botschPolygonMeshProcessing2010}, with spectral properties used for shape analysis~\cite{reuterLaplaceBeltramiSpectra2006} and correspondence~\cite{levyLaplaceBeltramiEigenfunctionsAlgorithm2006, ovsjanikovFunctionalMapsFlexible2012}. A closely related tool is the heat equation $\partial_t f = -\Delta f$, whose solution $f(t)=e^{-t\Delta}f_0$ smooths initial signals $f_0$. Heat diffusion enables computing shape descriptors~\cite{sunConciseProvablyInformative2009, bronsteinScaleinvariantHeatKernel2010}, geodesic distances~\cite{craneHeatMethodDistance2017, fengHeatMethodGeneralized2024} parallel transport~\cite{sharpVectorHeatMethod2019} or shape correspondences~\cite{vestnerEfficientDeformableShape2017, caoSynchronousDiffusionUnsupervised2025}.
Heat smoothing has also inspired neural architectures for geometric learning~\cite{sharpDiffusionNetDiscretizationAgnostic2022, gaoIntrinsicVectorHeat2024}
and generative models~\cite{yang2023diffusion}. In practice, $e^{-t\Delta}$ is computed via implicit Euler integration~\cite{botschPolygonMeshProcessing2010} or spectral truncation~\cite{sharpDiffusionNetDiscretizationAgnostic2022}, relying on mesh discretizations~\cite{pinkallComputingDiscreteMinimal1993, meyerDiscreteDifferentialGeometryOperators2003, sharpNavigatingIntrinsicTriangulations2019} and pre-computed factorizations. 
Extensions to non-manifold triangulations have been proposed~\cite{sharpLaplacianNonmanifoldTriangle2020, belkinConstructingLaplaceOperator2009}, but generalizing to high-dimensional or unstructured data such as noisy point clouds and sparse voxel grids remains challenging.

\paragraph{Graph Laplacians.} The Laplacian also plays a central role in graph-based learning and analysis~\cite{Chung:1997}, supporting spectral clustering~\cite{ vonluxburgTutorialSpectralClustering2007}, embedding, and diffusion.
Unlike mesh cotangent Laplacian~\cite{pinkallComputingDiscreteMinimal1993, meyerDiscreteDifferentialGeometryOperators2003}, graph Laplacians relies only on connectivity and edge weights, enabling broader applicability.
Discrete approximations of heat diffusion, such as the explicit Euler step $I-t\Delta$, form the basis of many graph neural networks~\cite{kipfSemiSupervisedClassificationGraph2017, hamiltonInductiveRepresentationLearning2017, chamberlainGRANDGraphNeural2021}, with recent works exploring approximate implicit schemes~\cite{behmaneshTIDETimeDerivative2023, chamberlainGRANDGraphNeural2021}, which however lose key properties or limit the flexibility of the diffusion.
Beyond Laplacians, message passing~\cite{gilmerNeuralMessagePassing2017} or graph attention~\cite{velickovicGraphAttentionNetworks2018} offer alternative forms of smoothing.
In this work, we propose \emph{general axioms} for smoothing operators on discrete domains, formalizing desirable properties such as symmetry or mass conservation. We show that popular approaches, as well as various Laplacian normalization techniques (symmetric, random-walk) can be interpreted as specific cases in this framework.
These classical methods often trade-off symmetry or mass preservation, while we leverage \emph{Sinkhorn normalization} that satisfies both. This produces operators that are symmetric under a mass-weighted inner product and spectrally similar to Laplacian exponentials, while remaining compatible with unstructured geometric data.

\begin{figure*}
  \centering
    \includegraphics[width=\linewidth]{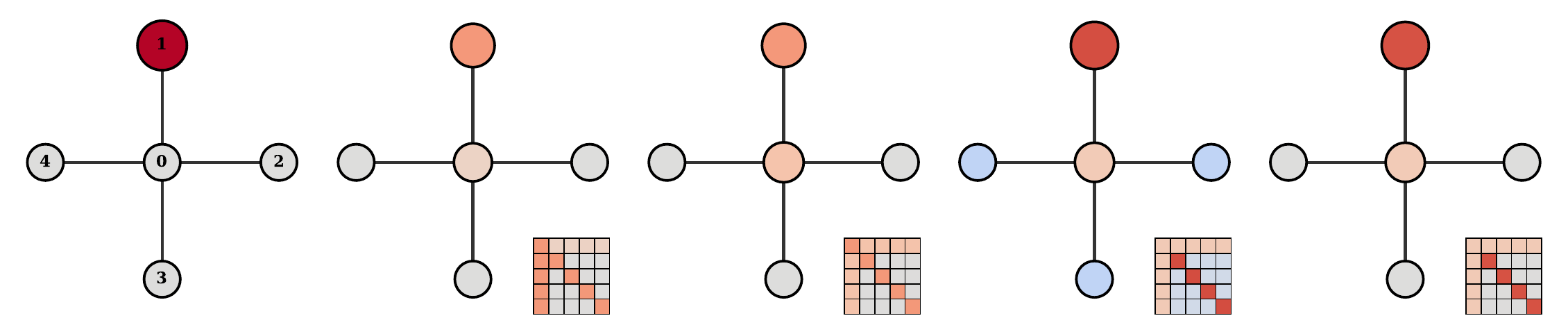}
    \begin{subfigure}[b]{0.18\textwidth}
        \vspace{0.15cm}
        {\centering\small\hspace{.3cm} \textit{Mass: \(1.00\)}\par}
        \vspace{0.15cm}
        \caption{Input Dirac}
        \label{fig:subfig:Input Dirac}
    \end{subfigure}
    \hfill
    \begin{subfigure}[b]{0.18\textwidth}
        \vspace{0.15cm}
        {\centering\small\hspace{.3cm} \textit{Mass: \(0.62\)}\par}
        \vspace{0.15cm}
        \caption{Row}
        \label{fig:subfig:Dirac - Row Norm}
    \end{subfigure}
    \hfill
    \begin{subfigure}[b]{0.18\textwidth}
        \vspace{0.15cm}
        {\centering\small\hspace{.3cm} \textit{Mass: \(0.75\)}\par}
        \vspace{0.15cm}
        \caption{Symmetric}
        \label{fig:subfig:Dirac - Symmetric Norm}
    \end{subfigure}
    \hfill
    \begin{subfigure}[b]{0.18\textwidth}
        \vspace{0.15cm}
        {\centering\small\hspace{.3cm} \textit{Mass: \(1.00\)}\par}
        \vspace{0.15cm}
        \caption{Spectral}
        \label{fig:subfig:Dirac - Spectral Approx}
    \end{subfigure}
    \hfill
    \begin{subfigure}[b]{0.18\textwidth}
        \vspace{0.15cm}
        {\centering\small\hspace{.1cm} \textit{Mass: \(1.00\)}\par}
        \vspace{0.15cm}
        \caption{Sinkhorn}
        \label{fig:subfig:Dirac - Our Diffusion}
    \end{subfigure}
  \caption{\label{fig:Graph Example}Diffusion of a Dirac function for different normalizations of the raw smoothing operator $K$. Row and symmetric normalizations distort mass, while a truncated spectral approximation that uses $4$ out of $5$ eigenvectors introduces negative values (in blue). In contrast, our symmetric Sinkhorn normalization preserves both positivity and mass. The full diffusion matrices are shown as insets}
\end{figure*}

\paragraph{Sinkhorn Scaling.} The Sinkhorn algorithm~\cite{sinkhornConcerningNonnegativeMatrices1967} scales a nonnegative matrix into a doubly stochastic one via alternating row and column normalizations, and is widely used in machine learning~\cite{cuturiSinkhornDistancesLightspeed2013} thanks to its efficiency and GPU compatibility. For symmetric inputs, a symmetry-preserving variant exits~\cite{knightSymmetryPreservingAlgorithm2014}.
Recent work in manifold learning related to diffusion maps~\cite{coifmanDiffusionMaps2006} applies Sinkhorn normalization to graph Laplacians and Gaussian kernels on point clouds~\cite{marshall2019manifold, wormell2021spectral, chengBistochasticallyNormalizedGraph2024}. These works focus on the asymptotic limit to an underlying smooth Laplace operator when the scale $\sigma\to0$ and sampling density increases.
In contrast, we are interested in geometry processing and in \emph{the smoothing operation itself}, at fixed scale. To this end, we obtain stability results for fixed bandwidth and increasing density, and develop an axiomatic framework which corrects \emph{any smoothing operator} so that it behaves similarly to heat diffusion. In particular, we ensure exact mass preservation and symmetry under a mass-weighted inner product.

\paragraph{Generalization to Unstructured Data.} Extending differential operators to unstructured data remains an open challenge. Irregular representations, such as point clouds or gaussian splats, typically rely on local approximations or learned kernels~\cite{wuPointConvDeepConvolutional2019,sharpLaplacianNonmanifoldTriangle2020,sharpDiffusionNetDiscretizationAgnostic2022,zhouLaplaceBeltramiOperatorGaussian2025}. Our framework requires only similarity between points, providing a unified approach to smoothing across these modalities, without consistent manifold assumptions or expensive linear algebra routines.

\paragraph{Motivation and Contribution.} Heat diffusion provides a natural smoothing operation on continuous manifolds, but its discretization is often expensive or compromises important properties such as mass preservation. On general discrete geometric domains, where Laplacians are not easily accessible, general smoothing kernels are often preferred in practice. However these un-normalized kernels usually lack some essential structural properties required for robust geometry processing, such as stability under varying sampling density or mass preservation.
In this work, we show that such arbitrary smoothing operators can be corrected at minimal computational cost to mimic the key properties of heat diffusion, without any knowledge of an underlying Laplacian. To this end, \Cref{sec:warm_up} provides intuition and a basic construction on simple unweighted graphs, while \Cref{sec:Theoretical Analysis} describes the full theoretical framework for our diffusion operators.

\section{Warm-up: Graphs} \label{sec:warm_up}

To build intuition about our framework, we begin with a simple example on an unweighted undirected graph $\Gg$ with vertex set $\Vv$ and edge set $\Ee$ (see~\Cref{fig:Graph Example}). In this section, we illustrate two common approaches to smoothing a signal $f\in\RR^{\Vv}$ defined on the graph vertices.

\paragraph{Laplacian Smoothing.} A common method to quantify the regularity of $f$ is to use a discrete derivative operator $\delta \in \{0, \pm 1\}^{\Ee \times \Vv}$, which encodes differences across edges. This induces a Dirichlet energy $E(f) =\tfrac{1}{2} \| \delta f \|^2=  \tfrac{1}{2} (\delta f)^\top \delta f$, that can be rewritten as $E(f) = \tfrac{1}{2}  f^\top \Delta f$, where $\Delta = \delta^\top \delta$ is a symmetric positive semi-definite matrix acting as a discrete Laplacian.
The matrix $\Delta$ admits an orthonormal eigendecomposition with eigenvectors $(\Phi_i)$ and non-negative eigenvalues $(\lambda_i)$. Low values of $E(\Phi_i)=\lambda_i/2$ correspond to smoother eigenfunctions. These eigenvectors can be interpreted as frequency modes: $\lambda_1 = 0$ corresponds to the constant function, while higher eigenvalues $\lambda_i$ capture finer variations.

Heat diffusion, governed by the operator $\Hh_t=e^{-t\Delta}$, acts as a smoothing transform that damps high-frequency components while preserving the low-frequencies. If $f = \sum_i \widehat{f}_i \Phi_i$, then:
\begin{equation}
E(\Hh_t f) ~=~ \tfrac{1}{2}\sum_i \lambda_i e^{-2t\lambda_i} \widehat{f}_i^2 ~\leq~ E(f)~.
\end{equation}
By construction, diffusion regularizes input functions and converges as $t\rightarrow\infty$ to a constant signal $\widehat{f}_1$.
Remarkably, the \emph{total mass} of the signal is also preserved for all $t$:
if $\langle\cdot,\cdot\rangle$ denotes the dot product and $1$ is the constant vector, then $\langle 1, \Hh_t f \rangle = \langle 1, f \rangle$. This follows from the symmetry of $e^{-t\Delta}$ and the fact that constant functions are fixed points of the heat flow: $\Hh_t 1 = 1$.
Lastly, since $-\Delta$ is a Metzler matrix (non-negative off-diagonals), $\Hh_t$ is entrywise non-negative, preserving signal positivity~\cite{bermanNonnegativeMatricesMathematical1994}.

While the Laplacian exponential provides strong regularization properties, its computation is expensive in practice. Most authors rely on the implicit Euler scheme $(I + t\Delta)^{-1} f$
which guarantees numerical stability but requires solving a linear system for every new signal $f$.

\paragraph{Message-passing and Local Averages.} 
A common alternative is to apply local averaging operators $K$ derived from the graph's adjacency matrix $A$.
One possible choice is to use the linear operator $K=\tfrac{1}{2}(D + A)$, where $D = \text{diag}(A1)$ is the diagonal degree matrix. However, this raw smoothing operator $K$ lacks key properties of $e^{-t\Delta}$ and is highly sensitive to vertex degrees: well-connected vertices disproportionately influence the output, while low-degree nodes contribute less. This introduces a bias at boundaries of the domain that is undesirable in many applications.

Several methods have been proposed to mitigate this issue~\cite{Chung:1997,coifmanDiffusionMaps2006}:
\begin{compactenum}
    \item \textbf{Row normalization} divides each row of $K$ by the degree of the corresponding vertex. This yields the operator $D^{-1} K$ that performs a local averaging but breaks symmetry.
    \item \textbf{Symmetric normalization} restores symmetry via $D^{-\frac{1}{2}} K D^{-\frac{1}{2}}$, but does not preserve constant signals.
    \item \textbf{Spectral methods} approximate the heat diffusion operator $e^{-t\Delta}$ as a low-rank matrix $\sum_{i=1}^R e^{-t\lambda_i} \Phi_i \Phi_i^\top$ in the span of the first $R$ eigenvectors of the Laplacian $\Delta$. 
    While this preserves desirable diffusion properties, computing these eigenvectors is expensive
    and truncation introduces undesirable ringing artifacts.
\end{compactenum}

\paragraph{Issues and Normalization.}
As illustrated in~\Cref{fig:subfig:Dirac - Row Norm,fig:subfig:Dirac - Symmetric Norm,fig:subfig:Dirac - Spectral Approx}, existing normalization strategies fail to fully capture the desired properties of heat diffusion. The \emph{Sinkhorn} or \emph{bi-stochastic} scaling algorithm offers a principled and fast alternative, that combines the benefits of both row-wise and symmetric normalization, while avoiding the artifacts of spectral methods.
The core idea is to apply symmetric normalization iteratively until convergence. Under mild assumptions, developed in \Cref{sec:Theoretical Analysis}, there exists a unique positive diagonal matrix $\Lambda$ such that the rescaled operator $\Lambda K \Lambda$ is both symmetric and mass-preserving, as illustrated in~\Cref{fig:subfig:Dirac - Our Diffusion}.

Related work on manifold learning has studied Sinkhorn normalization for graph Laplacian derived from points clouds in $\mathbb{R}^d$~\cite{marshall2019manifold, wormell2021spectral, chengBistochasticallyNormalizedGraph2024}, using Gaussian kernels $K$, and with a specific focus on the joint limit of increasing sample density and vanishing bandwidth.
We are instead interested in correcting an \emph{arbitrary} averaging operator on any discrete geometric domain, for which the following section provides the full theoretical framework.


\section{Theoretical analysis} \label{sec:Theoretical Analysis}

\paragraph{Notations.} Let $\Xx \subset \mathbb{R}^d$ be a bounded domain with a positive Radon measure $\mu$. We consider signals $f:\Xx\to\RR$ in $L^2_\mu(\Xx)$ with inner product $ \langle f,g\rangle_\mu = \int_\Xx f(x)g(x) \diff\mu(x)$. We define the \emph{mass} of a function as $\langle f,1\rangle_\mu = \int_X f(x)\diff \mu(x)$.
For discrete $\mu = \sum_{i=1}^N m_i \delta_{x_i}$, functions $f$ become vectors $(f(x_1),\dots,f(x_N))$ and $\mu$ corresponds to a positive diagonal matrix $M = \text{diag}(m_1, \dots, m_N) \in\RR^{N\times N}$. The inner product becomes $\langle f,g\rangle_\mu = f^\top M g$: we use $\langle \cdot,\cdot\rangle_\mu$ or $\langle \cdot,\cdot\rangle_M$ interchangeably.

We denote by $A^\topmu$ (or $A^\topM$),  the adjoint of $A:L^2_\mu(\Xx)\rightarrow L^2_\mu(\Xx)$ with respect to the $\langle \cdot,\cdot\rangle_\mu$. In the finite case, $A^\topM = M^{-1} A^\top M$, satisfying $\langle f, Ag\rangle_M = \langle A^{\top_M}f, g\rangle_M$, where $A^\top$ is the standard transpose.
A matrix is symmetric with respect to $\langle \cdot, \cdot \rangle_M $ iff $S = KM$ with $K^\top = K$.

\paragraph{Laplace-like Operators.} 
As introduced in~\Cref{sec:warm_up}, Laplacians capture local function variations. 
We highlight the key structural properties of Laplacians, simplifying the list identified in ~\citet{wardetzkyDiscreteLaplaceOperators2008} for meshes:
\begin{definition}[Laplace-like Operators]\label{def:Laplace-like}
A \emph{Laplace-like operator} is a linear map 
$\Delta : L^2_\mu(\Xx)\rightarrow L^2_\mu(\Xx) $,
identified with a matrix $(\Delta_{ij})$ in the discrete case,
that satisfies the following properties: 
\begin{center}
\vspace{-0.5em}
\begin{tabular}{@{}ll@{}} 
 $(i)$ $\Delta^{\top_\mu} = \Delta$ & 
 $(iii)$ $\langle f, \Delta f\rangle_\mu \geq 0, \quad \forall f \in L^2_\mu(\Xx)$
 \\ [2pt] 
 $(ii)$ $\Delta \mathbf{1} = 0$ & 
 $(iv)$ $\Delta_{ij} \leq 0, \quad \forall i\ne j$
\end{tabular}
\end{center}
\vspace{-0.5em}
\end{definition}

Properties $(i)$--$(iii)$ reflect the classical self-adjoint positive semi-definite structure, typically obtained from integration by parts: $\langle \nabla f, \nabla g\rangle_\mu = \langle f, \Delta g\rangle_\mu$.
Condition $(iv)$, which corresponds to a \emph{Metzler} structure~\cite{bermanNonnegativeMatricesMathematical1994} in the discrete setting (\ie non-negative off-diagonal entries), ensures intuitive diffusion behavior: diffusing a signal with $\partial_t f=-\Delta f$ causes mass to flow outwards~\cite{wardetzkyDiscreteLaplaceOperators2008}. We refer to~\Cref{sec:metzler} for a definition in the continuous case using Kato's inequality~\cite{arendtGeneratorsPositiveSemigroups1984}.

These conditions hold for graph Laplacians. On triangle meshes, the cotangent Laplacian satisfies properties $(i)$--$(iii)$ by construction,  but $(iv)$ only when angles are not obtuse. Violations of this condition, which results in undesirable positive weights, are a well-known issue in geometry processing, usually solved using intrinsic triangulations~\cite{bobenkoDiscreteLaplaceBeltrami2007,sharpNavigatingIntrinsicTriangulations2019}.

\paragraph{Diffusion Operators.}
As presented in~\Cref{sec:warm_up}, the family of diffusion operators $e^{-t\Delta}$ associated to a Laplacian $\Delta$ play a central role in geometry processing and learning. These operators smooth input signals while preserving key structural properties. From the properties of~\Cref{def:Laplace-like} we define diffusion operators as follows:
\begin{definition}[Diffusion Operators]\label{def:Diffusion Operators}
A \emph{diffusion operator} is a linear map 
$Q : L^2_\mu(\Xx)\rightarrow L^2_\mu(\Xx) $
that satisfies the following properties,  where $\sigma(Q)$ is the spectrum of $Q$:
\begin{center}
\vspace{-0.5em}
\begin{tabular}{@{}ll@{}} 
 $(i)$ $Q^{\top_\mu} = Q$ &
 $(iii)$ $\sigma(Q) \subseteq [0,1]$
 \\ [2pt]
 $(ii)$ $Q 1 = 1$ &
 $(iv)$ $f \geq 0 \implies Qf \geq 0$
\end{tabular}
\end{center}
\vspace{-0.5em}
\end{definition}

Laplace-like operators and diffusion operators are almost equivalent: the exponential of an any Laplace-like operator is a diffusion operator, while the principal logarithm of a diffusion operator yields properties $(i)$--$(iii)$ of \Cref{def:Laplace-like}. Property $(iv)$ in \Cref{def:Diffusion Operators} is slightly weaker: true equivalence would 
require $Q^t$ to be entrywise positive for all $t \ge 0$ (\Cref{sec:metzler}).

These properties reflect the structure of classical heat diffusion. Symmetry and constant preservation imply \emph{mass conservation}, since for any $f$, \(\langle 1,Q f\rangle_\mu=\langle Q^\topmu 1,f\rangle_\mu=\langle Q1,f\rangle_\mu=\langle 1,f\rangle_\mu\).
Entrywise positivity $(iv)$ follows from the non-negativity of the heat kernel, with more details provided in~\Cref{sec:metzler}.
Damping $(iii)$ ensures that repeated applications of $Q$ attenuate high-frequency components.

Finally, when $Q = e^{-t\Delta}$, its leading eigenvectors are with the lowest-frequency modes of $\Delta$, with $\lambda_i^Q = e^{-t \lambda_i^\Delta}$. This allows to recover low-frequency Laplacian structure via power iterations on $Q$, without computing small eigenpairs directly (see~\Cref{sec:Results}).

\begin{figure*}[ht]
  \centering
  \includegraphics{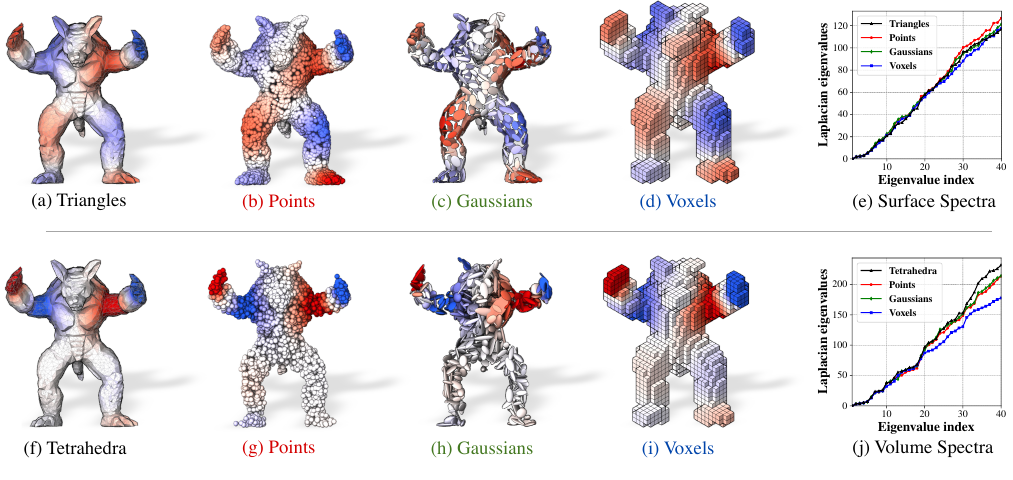}
  \caption{\label{fig:Smoothing Eigenvectors} Spectral analysis on the Stanford Armadillo \cite{krishnamurthy1996fitting}
  normalized to the unit sphere, treated as a surface (top) and volume (bottom).
  We compare the reference cotan Laplacian~(a,f) to our normalized diffusion operators on clouds of $5{\,}000$ points~(b,g), mixtures of $500$ Gaussians~(c,h) and binary voxel masks~(d,i),
  all using a Gaussian kernel of radius $\sigma = 0.05$ (edge length of a voxel). 
 We display the 10th eigenvector~(a–d,f–i) and the first 40 eigenvalues~(e,j).
  }
  \vspace*{-.2cm}
\end{figure*}

\paragraph{Smoothing Operators.}
In practice, defining a diffusion operator without access to an underlying Laplacian can be challenging. Instead, many operators commonly used in geometry processing, such as adjacency or similarity matrices, implicitly encode local neighborhood structures and enable function smoothing through local averaging. We refer to such matrices as \emph{smoothing operators}:

\begin{definition}[Smoothing Operators]\label{def:Smoothing Operators}
A \emph{smoothing operator} is a linear map 
$S : L^2_\mu(\Xx)\rightarrow L^2_\mu(\Xx) $,
identified with a matrix $(S_{ij})$ in the discrete case,
that satisfies the following properties: 
\begin{center}
\vspace{-0.5em}
\begin{tabular}{@{}ll@{}} 
 $(i)$ & $S^{\top_\mu} = S$\\
 $(ii)$ & $\langle f, Sf \rangle_\mu \geq 0, \quad \forall f \in L^2_\mu(\Xx)$ \\
 $(iii)$ & $f \geq 0 \implies Sf \geq 0$
\end{tabular}
\end{center}
\vspace{-0.5em}
\end{definition}
In the discrete setting, property $(i)$ implies that $S$ decomposes as $S = KM$, with $K^\top = K$, while Property $(iii)$ ensures $S_{ij} \ge 0$.
Unlike diffusion operators, smoothing operators are not required to preserve constants ($S 1 \approx 1$ is possible but not guaranteed) or have eigenvalues bounded by 1, which can lead to numerical vanishing or explosion of signals. Our proposed Sinkhorn normalization allows us to correct such operators into valid diffusion operators.

\begin{algorithm}[t]
\caption{\label{algo:Symmetric Sinkhorn}Symmetric Sinkhorn Normalization}


\begin{algorithmic}[1]
   \STATE {\bfseries Input:} Smoothing matrix $S \in \mathbb{R}^{N \times N}$ \hfill \emph{\% ($S=KM$).}
   \STATE Initialize $\Lambda \gets I_N$ \hfill \emph{\% $\Lambda$ is a diagonal matrix.}
   \WHILE{$\sum_i |\Lambda_{ii} \sum_j S_{ij} \Lambda_{jj} - 1| > \text{tolerance}$}
      \STATE $d_i \gets \sum_j S_{ij} \Lambda_{jj}$ \hfill \emph{\% Matvec product with $S$.}
      \STATE $\Lambda_{ii} \gets \sqrt{\Lambda_{ii} / d_i}$ \hfill \emph{\% Coordinate-wise update.}
   \ENDWHILE
   \STATE {\bfseries Return} $Q = \Lambda S \Lambda$ \hfill \emph{\% $Q$ is a positive scaling of $S$.}
\end{algorithmic}
\end{algorithm}

\paragraph{Sinkhorn Normalization.}
Given a smoothing operator $S$, any positive diagonal scaling $Q=\Lambda S \Lambda$ remains $M$-symmetric and preserves the axioms of \Cref{def:Smoothing Operators}. Enforcing the property $Q1=1$ reduces to finding a single diagonal matrix $\Lambda$ for which the rows of $Q$ sum to one. We show that applying a symmetric variant of the Sinkhorn algorithm  recovers both this property and boundedness of the spectrum, thus defining a valid diffusion operator:


\begin{theorem}[Symmetric Normalization]\label{thm:Symmetric Normalization}
Let $\mu = \sum_{i=1}^N m_i \delta_{x_i}$ be a finite discrete measure
with positive weights $m_i > 0$, and $S$ a \emph{smoothing} operator encoded as an $N$-by-$N$ matrix with positive coefficients $S_{ij} > 0$. Then, there exists a unique \emph{diagonal} matrix $\Lambda$ with positive coefficients such that $Q=\Lambda S \Lambda$ is a \emph{diffusion} operator with respect to $\mu$, which can be obtained using \Cref{algo:Symmetric Sinkhorn}. 
\end{theorem}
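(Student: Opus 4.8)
The plan is to recognize the symmetric Sinkhorn normalization as the search for a positive vector $d = (d_1,\dots,d_N)$ (with $\Lambda = \mathrm{diag}(d)$) such that the diagonal scaling $Q = \Lambda S \Lambda$ satisfies the defining properties of a diffusion operator with respect to $\mu$. Writing $S = KM$ with $K = K^\top$ and $M = \mathrm{diag}(m_i)$, I would first check that the three structural properties of \Cref{def:Diffusion Operators} that do \emph{not} depend on the precise value of $\Lambda$ are automatic: entrywise positivity $(iv)$ holds because $S_{ij} > 0$ and $d_i > 0$ force $Q_{ij} > 0$; symmetry $(i)$, i.e. $Q^{\top_\mu} = Q$, is equivalent to $M Q$ being a symmetric matrix, and $M Q = M \Lambda K M \Lambda = \Lambda (M K M) \Lambda$ (since diagonal matrices commute) which is symmetric because $K$ is; and damping $(iii)$ will follow once $(ii)$ is established, since a symmetric (w.r.t. $\langle\cdot,\cdot\rangle_\mu$) operator with nonnegative eigenvalues that fixes the constant vector and has all entries positive is, after multiplying by $M^{1/2}$ on both sides, a symmetric nonnegative matrix with spectral radius achieved at the Perron eigenvector; constant preservation $(ii)$ will pin the top eigenvalue to $1$, and operator positivity will keep the rest in $[0,1]$.

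The crux is therefore constant preservation, $Q1 = \Lambda S \Lambda 1 = 1$, which is a genuine nonlinear equation on $d$: componentwise, $d_i \sum_j S_{ij} d_j = 1$ for all $i$. I would phrase this as a fixed-point problem and invoke the classical Sinkhorn–Knopp / matrix-scaling theory for \emph{symmetric} positive matrices, e.g. \cite{knightSymmetryPreservingAlgorithm2014}: for a symmetric matrix with strictly positive entries there is a unique positive diagonal $\Lambda$ making $\Lambda S \Lambda$ doubly stochastic, and it is obtained as the limit of the alternating (here, symmetrized) normalization steps; because $S$ is symmetric, the row-scaling and column-scaling factors coincide, the iteration collapses to a single-sided update, and convergence is geometric with rate governed by Birkhoff's contraction coefficient in the Hilbert projective metric. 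Existence and uniqueness of $\Lambda$ follow from strict positivity of the entries (which rules out the support/decomposability obstructions in the general Sinkhorn theorem); I would cite these results rather than reprove them, noting only that $S$ positive guarantees the diagonal of $S$ is positive so the scaling is well-defined at every step. One small bridge is needed: the standard theory produces $\Lambda S \Lambda$ doubly \emph{stochastic} in the ordinary sense ($\Lambda S \Lambda\, \mathbf 1 = \mathbf 1$), which is exactly $Q\mathbf 1 = \mathbf 1$; I must then separately observe this already gives symmetry \emph{with respect to $\mu$} — this is where the argument differs from the measure-free case, and it works precisely because $\Lambda$, $M$ are diagonal and commute, so $MQ = \Lambda(MKM)\Lambda$ is symmetric regardless of $M$.

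Finally I would close the spectral properties: with $Q\mathbf 1 = \mathbf 1$ and $Q$ having positive entries, Perron–Frobenius gives spectral radius $1$ with the constant as the (unique up to scale) nonnegative eigenvector; combined with operator positivity $(ii)$ — inherited since $Q = \Lambda S\Lambda$ is congruent to $S$, hence $\langle f, Qf\rangle_\mu = \langle \Lambda M^{-1}\!\cdot,\dots\rangle$ has the same sign pattern as $\langle \cdot, S\cdot\rangle_\mu \ge 0$ (more carefully, $M^{1/2}Q M^{-1/2} = (M^{1/2}\Lambda M^{-1/2})(M^{1/2} S M^{-1/2})(M^{-1/2}\Lambda M^{1/2})$ is a product exhibiting the same inertia as $M^{1/2}SM^{-1/2}\succeq 0$, up to the similarity, so all eigenvalues of $Q$ are $\ge 0$) — all eigenvalues lie in $[0,1]$, which is $(iii)$. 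Uniqueness of $\Lambda$ transfers directly from the matrix-scaling uniqueness statement. The main obstacle is the nonlinear fixed-point step (constant preservation), and the cleanest route is to reduce it verbatim to the known symmetric Sinkhorn scaling theorem and spend the remaining effort only on verifying that the $\mu$-weighted symmetry and the damping bound come along for free.
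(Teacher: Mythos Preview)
There is a concrete gap in your reduction to the symmetric Sinkhorn theorem. The matrix $S = KM$ is \emph{not} symmetric in the ordinary sense when $M \neq I$ (it is only $\mu$-symmetric, i.e.\ $MS$ is symmetric), so the result of \cite{knightSymmetryPreservingAlgorithm2014} for symmetric matrices does not apply to $S$ as you invoke it. Correspondingly, the normalized operator $Q = \Lambda S \Lambda$ is \emph{not} doubly stochastic: its rows sum to $1$ by construction, but its column sums are $Q^\top 1 = \Lambda M K \Lambda\, 1$, which coincides with $Q1 = \Lambda KM\Lambda\, 1$ only if $K$ and $M$ commute. Your ``small bridge'' assumes away the non-trivial direction: what you actually need is existence and uniqueness of a \emph{single} diagonal $\Lambda$ solving $\Lambda S\Lambda\,1 = 1$ for a non-symmetric $S$, and the off-the-shelf symmetric theorem does not deliver this.

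The fix is to apply the scaling theorem not to $S$ but to the genuinely symmetric matrix $MKM$, with prescribed marginals $m = (m_1,\dots,m_N)$ rather than $1$: since diagonals commute, $\Lambda(MKM)\Lambda\,1 = m$ is equivalent to $\Lambda S\Lambda\,1 = 1$. This is exactly the paper's route, which phrases the problem as symmetric entropy-regularized optimal transport $\text{OT}_{\text{reg}}(\mu,\mu)$ with cost $C_{ij} = -\log K_{ij}$; the optimal plan is $\pi = M\Lambda KM\Lambda$ with both marginals equal to $\mu$, and Fenchel--Rockafellar duality gives existence, uniqueness and convergence of the dual potentials $\ell_i = \log\lambda_i$. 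Once this reduction is corrected, the rest of your outline is fine and in one place arguably cleaner than the paper: for the upper spectral bound the paper computes the explicit identity $2x^\top(I-Q')x = \sum_{i,j} K_{ij}\lambda_i\lambda_j(\sqrt{m_j}\,x_i-\sqrt{m_i}\,x_j)^2 \ge 0$, whereas your Perron--Frobenius argument on the entrywise-positive symmetric matrix $Q' = M^{1/2}QM^{-1/2}$ reaches the same conclusion more directly.
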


The proof can be found in~\Cref{sec:proof_1}.
The standard Sinkhorn algorithm~\cite{sinkhornConcerningNonnegativeMatrices1967} alternates row and column normalizations to obtain \emph{distinct} scalings $\Lambda_1,\Lambda_2$ such that $\Lambda_1 S \Lambda_2$ is bi-stochastic, thus breaking the $M$-symmetry. In contrast, the symmetric variant we use~\cite{knightSymmetryPreservingAlgorithm2014} finds a \emph{single} scaling vector $\lambda$ via $\lambda^{(k+1)} = \sqrt{\lambda^{(k)} \oslash (S\lambda^{(k)})}$, where $\oslash$ denotes element-wise division.

A natural concern is whether this discrete construction remains stable as the sampling density increases. Our second result confirms this for common kernels:

\begin{theorem}[Convergence under Refinement]\label{thm:Convergence}
Let $\Xx \subset \mathbb{R}^d$ be a bounded domain and $(\mu^s)_{s\in\mathbb{N}}$ be a sequence of finite discrete measures $\mu^{s} = \sum_{i} m^{s}_i \delta_{x^{s}_i}$ converging weakly to a (potentially continuous) Radon measure $\mu$ with positive, finite total mass.
Let $k$ be a Gaussian or exponential kernel with radius $\sigma > 0$, $S^s$ the associated smoothing operators, and $Q^s = \Lambda^s S^s \Lambda^s$ their symmetric normalizations. In particular, $S^s_{ij} = k(x^s_i,x^s_j) m^s_j$.

Then, there exist continuous positive functions $\lambda^s, \lambda : \Xx \to \mathbb{R}_{>0}$ defining the diagonal factors $\Lambda^s$, \ie $\Lambda^s_{ii} = \lambda^s(x_i^s)$, and such that the integral operator
\begin{equation}\label{eq:Qdef in convergence}
    Qf(x) \coloneqq \lambda(x) \int_\Xx k(x,y) \lambda(y) f(y) \diff\mu(y)
\end{equation}
is a diffusion operator, and $Q^s$ converges pointwise to $Q$. For all continuous signal $f$ on $\Xx$,
\begin{equation}
    Q^s f \xrightarrow{s\to\infty} Qf ~\text{uniformly on $\Xx$.}
\end{equation}
\end{theorem}

In other words, our construction is \emph{stable under increasing sampling density}. Unlike un-normalized smoothing operators whose spectral norm can explode with resolution, our normalized operators $Q^s$ converge to a bounded continuous operator $Q$ at a fixed kernel scale $\sigma\!>\!0$ (proof in \Cref{sec:proof_2}).
The manifold-learning literature usually studies the joint limit $N\!\to\!\infty,\ \sigma\!\to\!0$ to recover an underlying Laplace-Beltrami operator~\cite{wormell2021spectral,chengBistochasticallyNormalizedGraph2024}, and their analysis often passes through a fixed-bandwidth setting as part of a bias-variance decomposition. However, their target is the operator $\tfrac{1}{\sigma^2} (I-Q)$, which converges to a differential operator. In contrast, we focus on $Q$ itself, as a bounded diffusion operator at the fixed scale used in practice, defined on general discrete domains where no underlying Laplacian is available.
Note that we assume finite samples and positive entries in $S$, and refer to \citet[Sec.~3.1]{knightSymmetryPreservingAlgorithm2014} for the case of zero entries.



\begin{figure*}[t]
 \centering
  \includegraphics{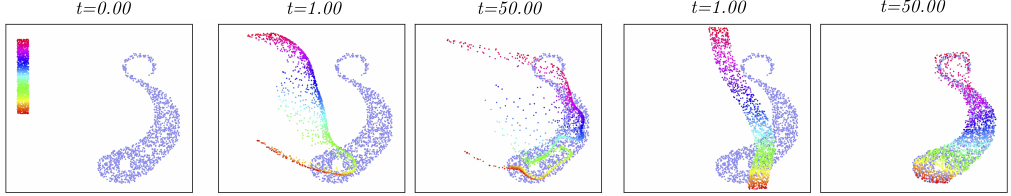}
  \begin{subfigure}[b]{0.18\textwidth}
        \vspace{0.01cm}
        \caption{Initial State}
        \label{fig:subfig:Input Grad}
    \end{subfigure}
    \begin{subfigure}[b]{0.4\textwidth}
        \vspace{0.01cm}
        \caption{Kernel Smoothing of the Gradient}
        \label{fig:subfig:Grad nonorm}
    \end{subfigure}
    \begin{subfigure}[b]{0.4\textwidth}
        \vspace{0.01cm}
        \caption{With Normalized Diffusion}
        \label{fig:subfig:Grad sinkhorn}
    \end{subfigure}
    \vspace{-0.2cm}
  \caption{\label{fig:Gradients}Flow of a source distribution of points (rainbow) towards a target (blue), following the gradient of the Energy Distance for a Gaussian kernel metric (b) and its normalized counterpart~(c).}
\end{figure*}

\paragraph{Robustness to Mass Perturbation.}

A practical concern when using discrete data is the sensitivity of the operator to noise in the estimated mass matrix $M$.
The following proposition ensures that symmetric Sinkhorn normalization is \emph{stable}: errors in the mass matrix are damped by a factor of at least 2 in the scaling factors, preventing numerical explosion. We validate this linear relationship experimentally in \Cref{sec:appendix_stability}.

\begin{proposition}[Stability to Mass Perturbation]\label{prop:stability}
    Let $Q = \Lambda K M \Lambda$ be a normalized diffusion operator. Consider a first-order relative perturbation of the mass matrix $dm/m$.
    The induced relative error in the scaling factors $d\lambda/\lambda$ satisfies
    \begin{equation}
        \left\|\frac{d\lambda}{\lambda}\right\|_M \leq \frac{1}{2} \left\|\frac{dm}{m}\right\|_M.
    \end{equation}
    Furthermore, the operator variation $dQ$ depends linearly on the mass perturbation $dm/m$.
\end{proposition}
Proof can be found in \Cref{sec:appendix_stability}.

\section{Efficient Implementation}\label{sec:Practical Implementation}

\paragraph{Sinkhorn Convergence and Versatility.}
We use the symmetric Sinkhorn algorithm~\cite{knightSymmetryPreservingAlgorithm2014, feydyInterpolatingOptimalTransport2019}, which converges in \mbox{5-10} iterations in practice (\Cref{sec:qoperator_practice}). We show (\cref{sec:qoperator_practice}) this is equivalent to applying standard symmetric scaling to the matrix $MKM$, where convergence is controlled by the geometry of $K$, remaining robust to variations in the mass matrix $M$. While standard Sinkhorn updates can suffer from numerical issues or failure to converge for $\sigma\to0$, the symmetric variant guarantees stable convergence for any bandwidth~\cite{knightSymmetryPreservingAlgorithm2014}. In the limit $\sigma\to0$, the normalized matrix simply converges to identity.
We treat $S$ as a black-box matrix–vector product: no factorization or complex data structure is required as diffusion behavior is encoded entirely in the diagonal $\Lambda$. Implementation of the operators on CPU and GPU is available at \href{https://github.com/RobinMagnet/SinkhornKernels}{github.com/RobinMagnet/SinkhornKernels}.

\paragraph{Graphs.}
Given a symmetric adjacency matrix $A\!\ge\!0$, regularize $A_\varepsilon = A + \varepsilon 11^\top$ ($\varepsilon\!>\!0$). With vertex masses $M=\mathrm{diag}(m_i)$ and degree matrix $D$, set \( S=(D+A_\varepsilon)M \),
which satisfies \Cref{def:Smoothing Operators} and is efficient when $A$ is sparse. Weak diagonal dominance~\cite{hornMatrixAnalysis1985} ensures a positive spectrum.

\paragraph{Point Clouds and Gaussian Mixtures.}
Given a weighted point cloud $(x_i, m_i)$, we use Gaussian kernels $k$ with $S_{ij} = k(x_i, x_j)m_j$ for smoothing. Matrix-vector products scale to millions of points via the KeOps library~\cite{charlierKernelOperationsGPU2021,feydyFastGeometricLearning2020} or optimized attention layers~\cite{xFormers2022,daoFlashAttention2FasterAttention2023}. We describe how to adapt attention to euclidean distances in~\Cref{sec:qoperator_practice}

For Gaussian Mixtures, given a pair $m_i\mathcal{N}(x_i, \Sigma_i)$ and $m_j\mathcal{N}(x_j, \Sigma_j)$, we
define the pairwise covariance $\mathbf{C}_{ij} \coloneqq \sigma^2 I + \Sigma_i + \Sigma_j$. We then use
use the $L^2$ dot product of densities convolved with an isotropic Gaussian of variance $\sigma^2/2$:
\begin{equation}
S_{ij} = m_j \exp\Big[-\tfrac{1}{2}(x_i - x_j)^\top \mathbf{C}_{ij}^{-1}(x_i - x_j)\Big].
\end{equation}
Multiplicative constants are normalized out by \Cref{algo:Symmetric Sinkhorn}.

\paragraph{Voxel Grids.}
On regular grids, Gaussian smoothing is implemented as a \emph{separable} convolution. For sparse volumes, we leverage the efficient data structures of the Taichi library~\cite{hu2019taichi}.

\section{Results}\label{sec:Results}

\paragraph{Spectral Shape Analysis.} As discussed in~\Cref{sec:Theoretical Analysis}, we expect the leading eigenvectors of a diffusion operator $Q$ to approximate low-frequency Laplacian modes.
While spectral convergence typically require vanishing bandwidth~\cite{wormell2021spectral}, our construction preserves spectral consistency across diverse representations even at the \emph{fixed, non-zero scales} required for practical geometry processing.
\Cref{fig:Smoothing Eigenvectors} compares our operators on different modalities to FEM Laplacian on meshes, and we display additional results in~\Cref{sec:eigenvectors}. We estimate Laplacian eigenvalues from our normalized diffusion operators (App.~\ref{sec:eigenvectors}) and show in~\Cref{fig:Smoothing Eigenvectors} (Right) that their distributions remain consistent across modalities, with deviations on volumetric representations emerging near the voxel sampling scale. Additional experiments displayed in~\Cref{sec:eigenvectors} highlight stability to change of sampling density and bandwidth parameter $\sigma$.

\begin{figure*}[t]
 \centering
  \includegraphics{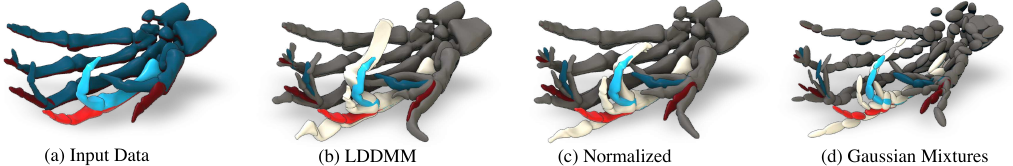}
  \caption{\label{fig:Geodesic} \textbf{Pose interpolation and extrapolation.} 
(a)~We interpolate between source ($t=0$, red) and target ($t=1$, blue) poses, and extrapolate to $t = -0.5$, $0.5$, and $1.5$. 
(b)~The standard LDDMM geodesic with a Gaussian kernel ($\sigma = 0.1$) produces unrealistic extrapolations. 
(c)~Using our normalized diffusion yields a smoother, more plausible path. 
(d)~The method remains robust on coarse Gaussian-mixture inputs (100 components).}
\end{figure*}

\paragraph{Normalized Metrics.}
Laplacians naturally induce Sobolev metrics and simple elastic penalties, which are commonly used as regularizers in machine learning and applied mathematics.

Following \citet{feydyInterpolatingOptimalTransport2019}, \Cref{fig:Gradients} minimizes the Energy Distance \cite{rizzo2016energy} between point clouds using gradient smoothing. With standard Gaussian smoothing, each point is simply driven by the raw sum of the neighboring attractions. Points in sparse regions such as boundaries therefore evolve more slowly, and lag behind in the flow. In contrast, our normalized diffusion computes a proper local average. This improves stability near boundaries and significantly accelerates convergence of the Energy–Distance flow, as quantified by the Chamfer distance in \Cref{sec:gradient_flows}.
This suggests applications in inverse rendering~\cite{nicoletLargeStepsInverse2021, tojoGreenCloudVolumetricGradient2025} or generative modelling~\cite{liu2016stein,arbel2019maximum,korba2024statistical}. Details can be found in \Cref{sec:gradient_flows}.


In \Cref{fig:Geodesic}, following \citet{kilianGeometricModelingShape2007}, we perform geodesic interpolation of anatomical poses.
While standard mesh metrics like ARAP \cite{sorkineAsRigidAsPossibleSurfaceModeling} or elastic shell models \cite{grinspunDiscreteShells2003,sassenRepulsiveShells2024} lack robustness to topological noise, 
computational anatomy instead use kernel metrics~\cite{booksteinPrincipalWarpsThinplate1989,pennec2019riemannian} via LDDMM~\cite{begComputingLargeDeformation2005,durrlemanMorphometryAnatomicalShape2014}. 
For large deformations, however, kernel metrics favor contraction--expansion dynamics over translations~\cite{micheli2012sectional} (\Cref{fig:Geodesic}b). 
Our normalized diffusion $Q$ mitigates this effect, providing more plausible paths across data structures (\Cref{fig:Geodesic}c–d).
Intuitively, this occurs because when points cluster, the raw kernel $K$ develops a dominant eigenvalue approaching $n$ that makes the geodesic energy collapse along contraction directions. In contrast, our normalization bounds the spectrum within $[0,1]$, removing this degeneracy.
We verify this effect by plotting the average area distortion across the geodesic path in \Cref{sec:shape_metrics}, along with full equations, implementation details, and a second example on animal meshes.

\begin{table}[t]
  \caption{Wall-clock runtimes in ms for GPU symmetric Sinkhorn normalization (5 iterations) compared to CPU runtimes for implicit Laplacian diffusion. Dense GPU solvers \emph{exceed memory} at beyond 10k points.}
  \label{tab:sinkhorn runtimes}
  \centering

  \small
  \setlength{\tabcolsep}{4pt}
  \renewcommand{\arraystretch}{1.1}
  \begin{tabular}{rrr}
    \toprule
    {$N$} & {\textbf{GPU Sinkhorn}} & {\textbf{CPU LU}} \\
    \midrule
     10,000 & 3   & 65    \\
     50,000 & 21  & 393   \\
    100,000 & 89  & 1,030  \\
    250,000 & 448  & 3,510  \\
    500,000 & 1,817 & 9,100  \\
    1,000,000 & 6,789 & 23,600 \\
    \bottomrule
  \end{tabular}
\end{table}

\paragraph{Runtimes.}
Our symmetric Sinkhorn converges in \mbox{5–10} iterations across modalities (see curves in~\Cref{sec:qoperator_practice}). We report \emph{GPU runtimes} for 5 iterations using a Gaussian kernel for point clouds of increasing size. These are compared to \emph{CPU runtimes} for implicit Laplacian diffusion using sparse LU factorization, which reflects typical usage \emph{when a Laplacian is available}. Dense solvers on the GPU are significantly slower and run out of memory beyond 10k points.
This comparison illustrates practical bottlenecks as sparse direct solvers lack mature GPU implementations. Our approach enables normalizing smoothing operators \emph{at run time} on GPUs for deep learning pipelines.
We provide a benchmark of CPU-only implementations in \Cref{sec:runtimes}, alongside full details on hardware and baselines.

\paragraph{Point Feature Learning.} 
Building on DiffusionNet~\cite{sharpDiffusionNetDiscretizationAgnostic2022}, we replace its spectral smoothing with our kernel-based operator (\emph{Q-DiffNet}), operating directly on 3D coordinates and learning diffusion scales $(\sigma_i)$ instead of diffusion times.
We integrate Q-DiffNet into the state of the art ULRSSM pipeline~\cite{caoUnsupervisedLearningRobust2023}, train on remeshed FAUST+SCAPE datasets~\cite{bogoFAUSTDatasetEvaluation2014, anguelovSCAPEShapeCompletion2005, renContinuousOrientationpreservingCorrespondences2019} using point clouds, and also evaluate on SHREC19~\cite{melziMatchingHumansDifferent2019}.

Focusing on \emph{feature extraction}, this experiment isolates the contribution of the diffusion operator. In particular, to ensure fair comparison and prevent variation attributable to input quality, all methods (including point cloud baselines) are provided with the same mesh-derived WKS descriptors.
We compare to reference mesh-based methods~\cite{sharpDiffusionNetDiscretizationAgnostic2022, caoUnsupervisedLearningRobust2023}  as \emph{topology-aware upper bounds}, and their point-cloud retrainings (``PC'') as direct competitors. Note that building a fully Laplacian-free state-of-the-art pipeline would require jointly redesigning inputs, losses, and architecture, and is left to future work.

As shown in~\Cref{tab:shape-acc}, Q-DiffNet performs on par or botter than with point-based baselines on FAUST and SCAPE, and significantly outperforms them on SHREC19, where missing parts can bias spectral diffusion. The method notably outperforms mesh-based upper bounds on SHREC19.
As ULRSSM functional-map block still relies on the truncated Laplacian spectrum, we additionally report a \emph{Q-FM} variant that uses the estimated Laplacian spectrum obtained from operator (see \Cref{sec:eigenvectors}). While this removes all mesh-dependent components from the pipeline, performance slightly decreases due to the simple approximation scheme. An ablation using raw XYZ inputs, provided in~\Cref{sec:point_features_learning}, confirms that Q-DiffNet remains highly competitive in a strict point-based setting.

Beyond quantitative performance, our framework offers great flexibility. While baselines rely on specific Laplacian designs, our operators apply to arbitrary modalities such as Gaussian splats or voxel grids, and can extend to partial data~\cite{attaikiDPFMDeepPartial2021}.  This unlocks the application of DiffusionNet to a broader class of geometric data.
Implementation details and qualitative results can be found in~\Cref{sec:point_features_learning}.

\begin{table}[h]
  \caption{Mean geodesic error of Q-DiffNet for shape correspondence on FAUST, SCAPE, and SHREC19 (\emph{lower is better}).}
  \label{tab:shape-acc}
  \centering
\small
\setlength{\tabcolsep}{4pt} 
\renewcommand{\arraystretch}{1.1}
\begin{tabular}{clccc}
\toprule
 & Method & FAUST & SCAPE & S19 \\
\midrule
\multirow{2}{*}{\rotatebox{90}{\textit{~Mesh}}} 
 & DiffNet       & \textbf{1.6} & 2.2 & 4.5 \\
 & ULRSSM        & \textbf{1.6} & \textbf{2.1} & 4.6 \\
\cmidrule{2-5}
\multirow{4}{*}{\rotatebox{90}{\textit{Points}}} 
 & DiffNet (PC) & 3.0 & 2.5 & 7.5 \\
 & ULRSSM (PC)  & 2.3 & 2.4 & 5.1 \\
 & Q-DiffNet (QFM) & 2.5 & 3.1 & 4.1 \\
 & Q-DiffNet        & 2.1 & 2.4 & \textbf{3.5} \\
\bottomrule
\end{tabular}

\end{table}

\section{Limitations}


\textbf{Dependence on the Mass Matrix.}
Our construction enforces symmetry and mass preservation w.r.t. the inner product defined by $M$. Geometric fidelity thus depends on the quality of $M$. With highly irregular sampling, robust estimation of $M$ can be challenging. While \cref{prop:stability} provides stability bounds, large errors in $M$ bias the normalization factors $\Lambda$, distorting the resulting diffusion.

\textbf{Strict Mass Preservation.}
Preserving mass is not always optimal.
In graph processing for instance high-degree nodes might benefit from amplifying features instead of redistributing equally among neighbors. Our normalized operator can limit expressivity in tasks requiring signal re-amplification.
However, given recent success of heat-like diffusion in graph neural networks~\citep{behmaneshTIDETimeDerivative2023, chamberlainGRANDGraphNeural2021}, the choice for information preservation during propagation remains highly application-dependent.

\textbf{Theoretical Analysis on Unstructured Data.}
While \Cref{algo:Symmetric Sinkhorn} provides controlled smoothing across domains, the theoretical link to a continuous Laplace-Beltrami operator is weaker than for mesh or point-cloud setting.
Our diffusion operator is a robust drop-in replacement for diffusion, but does not guarantee convergence to an underlying ``true`` Laplacian.

\section{Conclusion and Future Works}\label{sec:Conclusion}

We introduced a framework for defining heat-like diffusion operators on general geometric data, unifying classical constructions (graph adjacency, similarity matrices) into well-behaved diffusion mechanisms.
We demonstrated applications in Laplacian eigenvector approximation, gradient flow stabilization, and integration into neural networks as stable geometry-aware layers.

Future work should explore downstream applications where standard Laplacians are unavailable or unreliable. Scalability can be improved by incorporating sparse or low-rank attention mechanisms for large-scale point clouds and volumetric data.

Additionally, our normalized operator $Q$ naturally fits in the framework of~\citet{maasGradientFlowsEntropy2011}. In this setting, reversible Markov kernels define a Wasserstein-like metric on probability measures, for which heat diffusion coincides with the gradient flow of the entropy. Exploring this connection could enable a more principled usage of $Q$ on general geometric data.

\section*{Acknowledgements}
This work has been funded by the RHU ReBone and the France 2030 program managed by the Agence Nationale de la Recherche (ANR-23-IACL-0008, PR[AI]RIE-PSAI). Experiments were performed using HPC resources from GENCI–IDRIS (Grant AD011014760R1).

\section*{Impact Statement}

This paper presents work whose goal is to advance the field of Machine
Learning. There are many potential societal consequences of our work, none
which we feel must be specifically highlighted here.


\bibliography{biblio}
\bibliographystyle{icml2026}

\newpage
\appendix
\onecolumn



\section{Continuous Formulation of the Metzler Condition}
\label{sec:metzler}

Let $A$ be a real square matrix. We say that $A$ is a \emph{Metzler} matrix if its off-diagonal entries are non-negative. This condition implies that the matrix exponential $e^{tA}$ has non-negative entries for all $t \geq 0$. 
To see this, remark that for $t$ close enough to $0$, we have $e^{tA} = I + t A + o(t)$. This implies that the diagonal coefficients are approximately $1$, and the off-diagonal ones are approximately $t A_{ij} \geq 0$. Since for any $t$ we have $e^{tA} = (e^{tA/P})^P$, we can take $P$ large enough such that $e^{tA/P}$ is non-negative (by the small-$t$ argument). Since matrix multiplication preserves non-negativity, $e^{tA}$ is non-negative for all $t \geq 0$. 

Reciprocally, if $B$ is a matrix admitting a logarithm $\log(B)$ such that $B^t = \exp(\log(B)t)$ is entrywise positive for all $t \geq 0$, then the identity $B^t = I + t\log(B) + o(t)$ for small $t$ shows that $\log(B)$ is a Metzler matrix.

To extend this reasoning beyond finite-dimensional spaces, we use the formalism of semigroups on Banach spaces~\citep{engelOneParameterSemigroupsforLinearEvolutionEquations2000}:

\begin{definition}\label{def:strongly continuous semigroup}
    Let $T:t \to T(t)$ be a continuous function from $\mathbb{R}$ to the space of bounded linear operators on a Banach space $V$. We say that $T$ is a strongly continuous semigroup if:
    
    \begin{tabular}{ rl } 
     $(i)$ & $T(0) = I$  \\
     $(ii)$ & $T(t+s) = T(t) T(s)$ for all $t,s \geq 0$
     \\ 
     $(iii)$ & $\lim_{t \to 0} T(t) f = f$ for all $f \in V$
    \end{tabular}

    Its \emph{generator} $A$ is defined on the set of signals $f \in V$ for which the limit exists as:
    \begin{equation}
        A f = \lim_{t \to 0} \frac{T(t) f - f}{t}
    \end{equation}
    The semigroup is said to be \emph{positive} 
    if $T(t) f \geq 0$ for all $f \geq 0$ and $t \geq 0$.
\end{definition}

Under this framework, a Metzler matrix $A$ is the generator of a positive semigroup $t\mapsto e^{tA}$.

In full generality, extending the Metzler condition to infinite-dimensional operators is not straightforward since the notion of “off-diagonal” terms is ill-defined. The correct formalism uses Banach lattices; we refer to~\citep{schaeferBanachLatticesPositive1974} for proper statements and~\citep{arendtGeneratorsPositiveSemigroups1984} for proofs.
In our case, we make a simplifying assumption and restrict ourselves to Hilbert spaces of the form $L^2_\mu(\Xx)$, which include both finite-dimensional Euclidean spaces and infinite-dimensional $L^2$ spaces.
For any signal $f \in L^2_\mu(\Xx)$, we define $\operatorname{sign}(f)$ pointwise as:

\begin{equation}
\operatorname{sign}(f)(x) = \begin{cases}
+1 & \text{if } f(x) > 0 \\
-1 & \text{if } f(x) < 0 \\
0 & \text{if } f(x) = 0
\end{cases},
\quad \text{so that} \quad \abs{f} = \operatorname{sign}(f)f.
\end{equation}

This leads to the following proposition, which characterizes the Metzler property on general $L^2_\mu(\Xx)$ spaces via a pointwise inequality:

\begin{proposition}
    Let $A : \mathbb{R}^N \to \mathbb{R}^N$ be a be a linear operator represented by a matrix. Then the following inequalities are equivalent:

    \begin{tabular}{ rl } 
     $(i)$ & \textbf{(Metzler condition)}  $A_{ij} \geq 0$ whenever $i \neq j$ \\
     $(ii)$ & \textbf{(Kato's inequality) } $A \left|{f}\right| \geq \operatorname{sign}(f) A f$ for all $f \in V$
    \end{tabular}
\end{proposition}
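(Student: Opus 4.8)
The plan is to prove the two implications separately. Since $A$ is a finite matrix and both the absolute value and the sign are taken pointwise, all the inequalities in the statement are coordinatewise: condition $(ii)$ asserts $(A|f|)_i \ge \operatorname{sign}(f_i)\,(Af)_i$ for every index $i$ and every $f \in \RR^N$. I will exploit this index-by-index structure throughout, so that no functional-analytic machinery is needed in this finite-dimensional case.

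For the implication $(ii) \Rightarrow (i)$, I would simply test Kato's inequality on standard basis vectors. Fix $i \neq j$ and take $f = e_j$. Then $|f| = e_j$, so $(A|f|)_i = A_{ij}$; meanwhile $f_i = 0$, hence $\operatorname{sign}(f_i) = 0$ and the right-hand side of Kato's inequality vanishes. The inequality at coordinate $i$ then reads $A_{ij} \ge 0$, which is exactly the Metzler condition. This direction requires no estimates at all.

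For $(i) \Rightarrow (ii)$, I would expand the difference at coordinate $i$ as
\[
(A|f|)_i - \operatorname{sign}(f_i)\,(Af)_i ~=~ \sum_{j} A_{ij}\,\big(|f_j| - \operatorname{sign}(f_i)\, f_j\big),
\]
and argue that every summand is nonnegative. The diagonal term $j=i$ equals $A_{ii}\big(|f_i| - \operatorname{sign}(f_i)f_i\big) = 0$, so the (possibly negative) diagonal entry is harmless. For $j \neq i$ we have $A_{ij} \ge 0$ by hypothesis, and $|f_j| - \operatorname{sign}(f_i)f_j \ge 0$ since $|\operatorname{sign}(f_i)| \le 1$ gives $\operatorname{sign}(f_i)f_j \le |f_j|$. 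Summing the nonnegative terms yields $(A|f|)_i \ge \operatorname{sign}(f_i)(Af)_i$, as desired.

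I do not expect a genuine obstacle here: the statement is finite-dimensional and essentially combinatorial. The only point requiring care is the convention $\operatorname{sign}(0) = 0$ — this is precisely what makes the basis-vector test kill the right-hand side in $(ii)\Rightarrow(i)$, and what forces the diagonal cancellation in $(i)\Rightarrow(ii)$ regardless of the sign of $A_{ii}$. The same argument carries over verbatim to an integral operator on $L^2_\mu(\Xx)$ whose off-diagonal part has a nonnegative measurable density, which motivates using Kato's inequality as the definition of the Metzler condition in the continuous setting, but that extension is not needed for this proposition.
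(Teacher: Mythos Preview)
Your proof is correct and follows essentially the same coordinate-wise argument as the paper. The only difference is in the test vector for $(ii)\Rightarrow(i)$: you take $f=e_j$ and exploit $\operatorname{sign}(0)=0$ to kill the right-hand side, whereas the paper takes $f$ with $f_i=1$, $f_j=-1$ and obtains $A_{ii}+A_{ij}\ge A_{ii}-A_{ij}$; your choice is slightly cleaner but leans on the sign convention (which you correctly flag), while the paper's works regardless of how $\operatorname{sign}(0)$ is defined.
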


\begin{proof}
    Statement~$(ii)$ can be rewritten as: for all $i$,  
    \begin{equation}\label{suppl:proof:eq:sign inequality}
    \sum_j A_{ij} \abs{f_j} ~\geq~ \operatorname{sign}(f_i) \sum_j A_{ij} f_j ~.
    \end{equation}

    If~$(i)$ holds, then for $i \neq j$ we have $A_{ij} \abs{f_j} \geq A_{ij} f_j \operatorname{sign}(f_i)$, and, by definition, $A_{ii} \abs{f_i} = A_{ii} f_i \operatorname{sign}(f_i)$. Therefore we have~\Cref{suppl:proof:eq:sign inequality} and~$(ii)$.
    
    Conversely, suppose~$(ii)$ holds. 
    Consider a pair $i \neq j$ and a signal $f$ such that $f_i = 1$, $f_j = -1$
    and $f_k = 0$ for other indices $k$.
    \Cref{suppl:proof:eq:sign inequality} implies that:
    \begin{equation}
        A_{ii} + A_{ij} ~\geq~ A_{ii} - A_{ij}~,
        \quad 
        \text{i.e.} 
        \quad 
        A_{ij}~\geq~0~.
    \end{equation}
    This allows us to conclude.
\end{proof}

We would like to extend the implication from the finite-dimensional case: if $A$ satisfies Kato's inequality, then it should generate a semigroup of non-negative operators. In the infinite-dimensional setting, this implication requires additional structure.

\begin{definition}
A \emph{strictly positive subeigenvector} of an operator $A$ is a function $f\in D(A)$ so that:

\begin{tabular}{ rl }
$(i)$ & $Af \leq \lambda f$ for some $\lambda\in\RR$ \\
$(ii)$ & $f > 0$ almost everywhere
\end{tabular}

 where $D(A)$ denotes the domain of the (possibly unbounded) operator $A$.
\end{definition}

This allows us to state the following result, which is a direct consequence of Theorem 1.7 in~\citet{arendtGeneratorsPositiveSemigroups1984}:

\begin{proposition}
    Let $A$ be a generator of a strongly continuous semigroup on $L^2_\mu(\Xx)$. Assume that there exists a function $g \in D(A)$ such that:
    
    \begin{tabular}{ rl }
     $(i)$ & $g$ is a strictly positive subeigenvector of $A^{\top_\mu}$. \\
     $(ii)$ & \textbf{(weak Kato's inequality)} $\langle A^{\top_\mu} g, \left|{f}\right| \rangle_\mu \geq \langle \operatorname{sign}(f) A f, g \rangle_\mu$ for all $f \in D(A)~.$
    \end{tabular}
    
    Then the semi-group is positive (see~\Cref{def:strongly continuous semigroup}).
\end{proposition}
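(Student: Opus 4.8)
The statement to prove is: if $A$ generates a strongly continuous semigroup on $L^2_\mu(\Xx)$ and there exists $g \in D(A^{\top_\mu})$ that is a strictly positive subeigenvector of $A^{\top_\mu}$ satisfying the weak Kato inequality against all $f \in D(A)$, then the semigroup $(e^{tA})_{t \geq 0}$ is positive. Since the Proposition is explicitly advertised as ``a direct consequence of Theorem 1.7 in \cite{arendtGeneratorsPositiveSemigroups1984}'', the plan is to translate our hypotheses into the precise hypotheses of that theorem and invoke it. The skeleton of Arendt's characterization is that a semigroup is positive if and only if its generator $A$ satisfies a dispersiveness-type condition with respect to a suitable duality map; the role of $g$ is to provide the weight defining the relevant lattice seminorm (or, equivalently, to single out the ideal/part of the space where positivity is tested).

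The first step is to set up the correct functional-analytic objects. I would introduce the weighted space and the positive cone $L^2_\mu(\Xx)_+ = \{f : f \geq 0 \text{ a.e.}\}$, observe that $L^2_\mu(\Xx)$ is a Banach lattice under the pointwise order, and recall that positivity of a semigroup is equivalent to invariance of this cone. Next, I would use the strictly positive subeigenvector $g$ of $A^{\top_\mu}$ to build the half-space / normal functionals required by the resolvent-positivity criterion: for $f \in D(A)$, the natural ``tangent functional'' at $f$ pointing into the cone is $x \mapsto \operatorname{sign}(f)(x)\, g(x)$ tested against the appropriate argument, which is exactly what appears in hypothesis $(ii)$. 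The key algebraic identity to record is $\langle \operatorname{sign}(f) A f, g\rangle_\mu \leq \langle A^{\top_\mu} g, |f|\rangle_\mu \leq \lambda \langle g, |f|\rangle_\mu$, where the first inequality is the weak Kato hypothesis and the second uses $A^{\top_\mu} g \leq \lambda g$ together with $|f| \geq 0$ and $g > 0$. This chain is precisely the ``dispersiveness with respect to the weight $g$'' condition that, by Arendt's theorem, forces the resolvent $(\lambda' - A)^{-1}$ to be positive for $\lambda'$ large, hence (by the exponential formula $e^{tA} = \lim_{n\to\infty}(I - \tfrac{t}{n}A)^{-n}$) forces $e^{tA}$ to be positive.

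Concretely, the middle of the proof would: (1) fix $f \in D(A)$ and $\lambda' > \lambda$, set $u = (\lambda' - A)^{-1} f$ assuming $\lambda'$ is in the resolvent set (which it is, for $\lambda'$ large, since $A$ generates a $C_0$-semigroup), so $\lambda' u - A u = f$; (2) test the negative part $u^- = \max(-u, 0)$ against $g$ and use the identity $\langle \operatorname{sign}(u) A u, g\rangle_\mu \leq \lambda \langle |u|, g\rangle_\mu$ to derive $\langle f, \operatorname{sign}(u)\, g \rangle_\mu \geq \lambda' \langle |u|, g\rangle_\mu - \lambda \langle |u|, g \rangle_\mu = (\lambda' - \lambda)\langle |u|, g\rangle_\mu$; (3) conclude that if $f \geq 0$ then $\langle f, \operatorname{sign}(u) g\rangle_\mu \leq \langle f, g\rangle_\mu \cdot 1$... — here one has to be slightly careful and instead split $u = u^+ - u^-$, bound $\langle f, g\rangle_\mu \geq \langle f, \operatorname{sign}(u)g\rangle_\mu$ when $f\geq 0$ is false in general, so the cleaner route is to directly estimate $(\lambda'-\lambda)\langle u^-, g\rangle_\mu \leq \langle f^-, g\rangle_\mu = 0$, forcing $u^- = 0$ a.e.\ because $g > 0$ a.e.\ — hence $u \geq 0$, i.e.\ the resolvent is positive; (4) pass from resolvent positivity to semigroup positivity via the Hille–Yosida exponential formula. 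Alternatively, and more economically, I would simply verify that our two hypotheses are a restatement of the hypotheses of \cite[Theorem 1.7]{arendtGeneratorsPositiveSemigroups1984} and cite it, spelling out the dictionary between ``strictly positive subeigenvector of $A^{\top_\mu}$'' and the ``strictly positive $A'$-bounded functional'' in Arendt's formulation.

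I expect the main obstacle to be the precise bookkeeping in the finite-vs-infinite-dimensional passage: in infinite dimensions $\operatorname{sign}(u)$ is only a measurable function, $u^{\pm}$ need not lie in $D(A)$, and the pairing $\langle \operatorname{sign}(u) Au, g\rangle_\mu$ must be justified as a legitimate instance of the weak Kato inequality (which is stated for $f \in D(A)$, so here one applies it with $f := u \in D(A)$, which is fine). Making the negative-part argument rigorous — i.e., that $\langle \operatorname{sign}(u) Au, g\rangle_\mu$ controls $\langle |u|, g\rangle_\mu$ in the right direction and that $\langle u^-, g\rangle_\mu = 0$ with $g>0$ a.e.\ genuinely yields $u^- = 0$ a.e. — is the crux; everything else (the exponential formula, $\lambda'$ large in the resolvent set) is standard $C_0$-semigroup theory. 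Given that the paper positions this as ``a direct consequence'' of Arendt's theorem, the write-up should lean on the citation and only reproduce as much of the dispersiveness computation as is needed to make the dictionary transparent.
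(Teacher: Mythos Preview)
The paper does not actually prove this proposition: it simply states that the result ``is a direct consequence of Theorem~1.7 in~\cite{arendtGeneratorsPositiveSemigroups1984}'' and moves on. Your ``economical'' route---verify that the two hypotheses are a restatement of Arendt's assumptions and cite the theorem---is therefore exactly what the paper does, and is all that is expected here.

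Your longer resolvent sketch goes well beyond the paper and is essentially on the right track, but be aware that the step you flag as ``slightly careful'' is the only nontrivial one. The weak Kato inequality as stated controls $\langle A^{\top_\mu} g, |u|\rangle_\mu$, not $\langle A^{\top_\mu} g, u^-\rangle_\mu$ directly; to get the bound $(\lambda'-\lambda)\langle u^-, g\rangle_\mu \leq 0$ you need to combine the Kato inequality for $|u|$ with the linear identity $\langle A^{\top_\mu} g, u\rangle_\mu = \langle Au, g\rangle_\mu$ via $u^- = \tfrac{1}{2}(|u|-u)$, which produces an extra $\tfrac{1}{2}\langle \mathbf{1}_{\{u=0\}} f, g\rangle_\mu$ term that is harmless when $f\geq 0$. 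If you choose to write out the resolvent computation, make that combination explicit; otherwise, a clean citation with the dictionary (strictly positive subeigenvector of $A^{\top_\mu}$ $\leftrightarrow$ strictly positive $A'$-bounded functional; weak Kato inequality against $g$ $\leftrightarrow$ Arendt's dispersiveness condition) is entirely sufficient and matches the paper.
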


In our setting, the generator $A$ is equal to the opposite $-\Delta$ of a Laplace-like operator, and $g$ is the constant function $1$. Since our set of axioms implies that $-\Delta^{\top_\mu} 1 = - \Delta 1 = 0$, we always have that $1$ is a strictly positive subeigenvector of $-\Delta^{\top_\mu}$. This allows us to propose the following definition of a Laplace-like operator, which generalizes~\Cref{def:Laplace-like} to discrete measures:

\begin{definition}[General Laplace-like Operators]
    Let $\Delta$ be a generator of a strongly continuous semigroup on $L^2_\mu(\Xx)$, where $\mu$ has finite total mass.\\ We say that $\Delta$ is a \emph{Laplace-like operator} if for all $f \in L^2_\mu(\Xx)$:
    
    \begin{tabular}{ rlcrl } 
     $(i)$ & \textbf{Symmetry:} $\Delta^{\top_\mu} = \Delta$ & 
     \qquad &
     $(iii)$ & \textbf{Positivity:}  $\langle f, \Delta f\rangle_\mu \geq 0$
     \\ 
     $(ii)$ & \textbf{Constant cancellation:} $\Delta 1 = 0$ & 
     \qquad &
     $(iv)$ & \textbf{Kato's inequality:} $\left< \operatorname{sign}(f) \Delta f, 1 \right>_\mu \geq 0$
    \end{tabular}
\end{definition}

The results above show that if $t\mapsto T(t)$ is the strongly continuous semigroup generated by such a Laplace-like operator $\Delta$, then $T(t)$ satisfies the conditions of a diffusion operator (Definition~4.2 in the main manuscript).

We note that the assumption of finite total mass for $\mu$ ensures that the constant function $1$ belongs to $L^2_\mu(\Xx)$, and that our definition includes, as a special case, the classical Laplace–Beltrami operator on compact Riemannian manifolds.
\section{Proof of Theorem~\ref{thm:Symmetric Normalization}}
\label{sec:proof_1}

Our theoretical analysis relies on ideas developed in the context of entropy-regularized optimal transport. We refer to the standard textbook from \citet{peyre2019computational} for a general introduction, and to \citet{feydyInterpolatingOptimalTransport2019} for precise statements of important lemmas.
Let us now proceed with our proof of ~\Cref{thm:Symmetric Normalization}.

\begin{proof}
Recall that $\mu = \sum_{i=1}^N m_i \delta_{x_i}$ is a finite discrete measure with positive weights $m_i > 0$. The smoothing operator $S$ can be written as the product:
\begin{equation}
    S ~=~ K M~,
\end{equation}
where $K$ is a $N$-by-$N$ symmetric matrix with positive coefficients $K_{ij} > 0$ and $M = \text{diag}(m_1, \dots, m_N)$ is a diagonal matrix.
Our hypothesis of \emph{operator positivity} on $S$ implies that $K$ is a positive semi-definite matrix.
Finally, we can suppose that $\mu$ is a probability measure without loss of generality: going forward, we assume that $m_1 + \dots + m_N = 1$.

\paragraph{Optimal Transport Formulation.}
We follow Eq.~(1) in \citet{feydyInterpolatingOptimalTransport2019} and
introduce the symmetric entropy-regularized optimal transport problem:
\begin{equation}
    \text{OT}_\text{reg}(\mu,\mu)~=~
    \min_{\pi \in \text{Plans}(\mu,\mu)}
    \sum_{i,j=1}^N \pi_{ij} C_{ij} ~+~ \text{KL}(\pi, m m^\top)
\end{equation}
where $C_{ij} = -\log K_{ij}$ is the symmetric $N$-by-$N$ \emph{cost} matrix and
$\text{Plans}(\mu,\mu)$ is the simplex of $N$-by-$N$ \emph{transport plans},
i.e. non-negative matrices whose rows and columns sum up to $m = (m_1, \dots, m_N)$. $\text{KL}$ denotes the Kullback-Leibler divergence:
\begin{equation}
    \text{KL}(\pi, mm^\top) ~=~
    \sum_{i,j=1}^N \pi_{ij} \log \frac{\pi_{ij}}{m_i m_j}~.
\end{equation}
Compared with \citet{feydyInterpolatingOptimalTransport2019}, we make the simplifying assumption that $\varepsilon = 1$ and do not require that $C_{ii} = 0$ on the diagonal since this hypothesis is not relevant to the lemmas that we use in our paper.

\paragraph{Sinkhorn Scaling.}
The above minimization problem is strictly convex.
The fundamental result of entropy-regularized optimal transport, stated e.g. in~\citet[Section~2.1]{feydyInterpolatingOptimalTransport2019}
and derived from the Fenchel-Rockafellar theorem in convex optimization, 
is that its unique solution can be written as:
\begin{equation}
    \pi_{ij}
    ~=~  \exp(f_i + g_j - C_{ij}) \, m_i m_j~, \label{eq:ot_plan_dual}
\end{equation}
where $f = (f_1, \dots, f_N)$ and $g = (g_1, \dots, g_N)$ are two dual vectors, uniquely defined up to a common additive constant (a pair $(f,g)$ is solution if and only if the pair $(f-c, g+c)$ is also solution)~\citep[Proposition~11]{feydyInterpolatingOptimalTransport2019}.
In our case, by symmetry, there exists a unique constant such that $f = g$~\citep[Section~B.3]{feydyInterpolatingOptimalTransport2019}.
We denote by $\ell = (\ell_1, \dots, \ell_N)$ this unique ``symmetric'' solution.
It is the unique vector such that:
\begin{equation}
    \pi_{ij}
    ~=~  \exp(\ell_i + \ell_j - C_{ij}) \, m_i m_j
    ~=~ m_i e^{\ell_i} \, K_{ij} \, e^{\ell_j}m_j \label{eq:ot_plan_dual_symmetric}
\end{equation}
is a valid transport plan in $\text{Plans}(\mu,\mu)$. 
This matrix is symmetric and such that for all $i$:
\begin{equation}
    \sum_{j=1}^N \pi_{ij}~=~ m_i~
    \qquad \text{i.e.} \qquad 
    e^{\ell_i} \,\sum_{j=1}^N K_{ij} \, e^{\ell_j}m_j~=~1~.
\end{equation}
We introduce the positive scaling coefficients $\lambda_i = e^{\ell_i}$,
the diagonal scaling matrix $\Lambda = \text{diag}(\lambda_1, \dots, \lambda_N)$,
and rewrite this equation as:
\begin{equation}
    \Lambda K M \Lambda 1 ~=~ 1~
    \qquad \text{i.e.}\qquad
    Q 1 ~=~1~~\text{where}~~ Q = \Lambda K M \Lambda~.
\end{equation}
This shows that scaling $S = KM$ with $\Lambda$ enforces our \textbf{constant preservation} axiom for diffusion operators -- property $(ii)$ in~\Cref{def:Diffusion Operators}. 
Likewise, since $\Lambda$ is a diagonal matrix with positive coefficients, 
$Q$ satisfies axioms $(i)$ -- \textbf{symmetry} with respect to $M$ -- and $(iv)$ -- \textbf{entrywise positivity}.

Crucially, $\Lambda$ can be computed efficiently using a symmetrized Sinkhorn algorithm: our \Cref{algo:Symmetric Sinkhorn} is directly equivalent to in~\citet[Eq.~(25)]{feydyInterpolatingOptimalTransport2019}.

\paragraph{Spectral Normalization.}
To conclude our proof, we now have to show that the normalized operator
$Q$ also satisfies axiom $(iii)$ -- \textbf{damping} -- in our definition of diffusion operators, i.e. show that its eigenvalues all belong to the interval $[0,1]$.

To this end, we first remark that $Q = \Lambda K M \Lambda =  \Lambda K \Lambda M$
has the same eigenvalues as $Q' = \sqrt{M} \Lambda K \Lambda \sqrt{M}$,
where $\sqrt{M} = \text{diag}(\sqrt{m_1}, \dots, \sqrt{m_N})$. If $\alpha$ is a scalar and $x$ is a vector, the eigenvalue equation:
\begin{equation}
    Qx = \Lambda K \Lambda \sqrt{M} \underbrace{\sqrt{M} x}_y = \alpha x
    \qquad
    \text{is equivalent to} \qquad
    Q'y = \sqrt{M} \Lambda K \Lambda \sqrt{M}y = \alpha y
\end{equation}
with the change of variables $y = \sqrt{M}x$.

Then, we remark that for any vector $x$ in $\RR^N$,
\begin{align}
    & \sum_{i,j=1}^N K_{ij} \lambda_i \lambda_j (\sqrt{m_j}  x_i - \sqrt{m_i} x_j)^2
    ~\\
=~&
    \sum_{i,j=1}^N K_{ij} \lambda_i \lambda_j \big(
        m_j  x_i^2 + m_i x_j^2 
        - 2 \sqrt{m_i} \sqrt{m_j}  x_i x_j
    \big) \\
=~&
    \sum_{i=1}^N (\underbrace{\Lambda K M \Lambda 1}_{Q1 = 1})_i x_i^2 
    ~+~ \sum_{j=1}^N (\underbrace{\Lambda K M \Lambda 1}_{Q1 = 1})_j x_j^2 
    ~-~ 2 \, x^\top Q' x \\
=~ & 
2\, x^\top (I - Q') x~.
\end{align}
Since the upper term is non-negative as a sum of squares, we get that the eigenvalues of the symmetric matrix $I - Q'$ are all non-negative. 
This implies that the eigenvalues of $Q'$, and therefore the eigenvalues of $Q$,
are bounded from above by $1$.

In the other direction, recall that our hypothesis of operator positivity on $S$
implies that $K$ is a positive semi-definite matrix. This ensures that Q', and therefore Q, also have non-negative eigenvalues.
Combining the two bounds, we show that the spectrum of the normalized operator $Q$ is, indeed, included in the unit interval $[0,1]$.
\end{proof}

\section{Proof of Theorem~\ref{thm:Convergence}}
\label{sec:proof_2}

\begin{proof}
The hypotheses of \Cref{thm:Convergence} fit perfectly with those of \citet[Theorem~1]{feydyInterpolatingOptimalTransport2019}. Notably, we make the assumption that $\Xx$ is a bounded region of $\RR^d$: we can replace it with a closed ball of finite radius, which is a compact metric space. Just as in \Cref{sec:proof_1}, we can assume without loss of generality that the finite measures $\mu^s$ and the limit measure $\mu$ are probability distributions, that sum up to $1$: positive multiplicative constants are absorbed by the scaling coefficients $\Lambda^s$ and $\Lambda$.

If $k(x,y)$ is a Gaussian kernel of deviation $\sigma > 0$,
we use the cost function $\text{C}(x,y) = \tfrac{1}{2}\|x-y\|^2$
and an entropic regularization parameter $\varepsilon = \sigma^2$.
If $k(x,y)$ is an exponential kernel at scale $\sigma > 0$, the cost function is simply the Euclidean norm $\|x-y\|$ and the entropic regularization parameter $\varepsilon$ is equal to $\sigma$.

\paragraph{Continuous Scaling Functions.}
The theory of entropy-regularized optimal transport allows us to interpret the dual variables $f$, $g$ and $\ell$ of Eqs.~(\ref{eq:ot_plan_dual}-\ref{eq:ot_plan_dual_symmetric}) as continuous functions defined on the domain $\Xx$. Notably, for any probability distribution $\mu$, the continuous function $\ell : \Xx \rightarrow \RR$ is uniquely defined by the ``Sinkhorn equation'' -- see Sections~B.1 and~B.3 in \citet{feydyInterpolatingOptimalTransport2019}:
\begin{equation}
    \forall x \in \Xx, ~~\ell(x)~=~
    -\varepsilon \log \int_\Xx \exp \tfrac{1}{\varepsilon}\big(  \ell(y) - C(x,y) \big)\diff \mu(y)~.
    \label{eq:sinkhorn_equation}
\end{equation}

The first part of Theorem~4.2 is a reformulation of this standard result.
We introduce the continuous, positive function:
\begin{equation}
    \lambda(x) ~=~ \exp (\ell(x) / \varepsilon)~>~0
\end{equation}
which is bounded on the compact domain $\Xx$.
We remark that Eq.~(\ref{eq:sinkhorn_equation}) now reads:
\begin{align}
    \forall x \in \Xx,~~
    \lambda(x)~&=~ \frac{1}{
    \int_\Xx k(x,y) \lambda(y) \diff \mu(y)
    } \\
    \text{i.e.}~~
    1~&=~ \lambda(x) 
    \int_\Xx k(x,y) \lambda(y) \diff \mu(y)
    ~.
\end{align}
This implies that the operator $Q$ defined in \Cref{eq:Qdef in convergence} satisfies our \textbf{constant preservation} axiom for diffusion operators. By construction, it also satisfies the \textbf{symmetry}
and \textbf{entrywise positivity} axioms.
The \textbf{damping} property derives from the fact that we can write
$Q$ as the limit of the sequence of discrete diffusion operators $Q^s$ with eigenvalues in $[0,1]$.

\paragraph{Convergence.}
To prove it, note that the above discussion also applies to
the discrete measures $\mu^s = \sum_{i=1}^{N_s} m_i^s \delta_{x_i^s}$. We can uniquely define a continuous function $\ell^s : \Xx \rightarrow \RR$ such that:
\begin{equation}
    \forall x \in \Xx, ~~\ell^s(s)~=~
    -\varepsilon \log \sum_{j=1}^{N_s}  m_j^s\exp \tfrac{1}{\varepsilon}\big(  \ell^s(x_j^s) - C(x,x_j^s) \big)~,
\end{equation}
and interpret the diagonal coefficients of the scaling matrix $\Lambda^s$ as the values of the positive scaling function:
\begin{equation}
    \lambda^s(x) ~=~ \exp(\ell^s(x) / \varepsilon) ~>~0
\end{equation}
sampled at locations $(x_1^s, \dots, x_{N_t}^s)$.

Recall that the sequence of discrete measures $\mu^s$ converges weakly to $\mu$ as $s$ tends to infinity.
Crucially, Proposition~13 in \citet{feydyInterpolatingOptimalTransport2019} implies that the dual potentials $\ell^s$ converge \emph{uniformly on $\Xx$} towards $\ell$.
Since $\ell$ is continuous and therefore bounded on the compact domain $\Xx$, this uniform convergence also holds for the (exponentiated) scaling functions: $\lambda^s$ converges uniformly on $\Xx$ towards $\lambda$.

For any continuous signal $f : \Xx \rightarrow \RR$, we can write down
the computation of $Q^s f$ as the composition of a pointwise multiplication with a scaled positive measure $\mu^s\lambda^s$,
a convolution with the (fixed, continuous, bounded) kernel $k$, and a pointwise multiplication with the positive scaling function $\lambda^s$.
In other words:
\begin{equation}
    Q^s f ~=~ \lambda^s \cdot \big(k \star (\mu^s \lambda^s f) \big)~.
\end{equation}
Likewise, we have that:
\begin{equation}
    Qf ~=~ \lambda \cdot \big( k \star (\mu \lambda f) \big)~.
\end{equation}

Since $\lambda^s$ converges uniformly towards $\lambda$ and $f$ is continuous, the signed measure $\mu^s \lambda^s f$ converges weakly towards $\mu \lambda f$.
This implies that the convolution with the (bounded) Gaussian or exponential kernel
$k \star (\mu^s \lambda^s f)$
converges uniformly on $\Xx$ towards $k \star (\mu \lambda f)$, which allows us to conclude.
\end{proof}
\section{Stability to Noise in Mass}
\label{sec:appendix_stability}

\paragraph{Infinitesimal Variation.} Let $KM$ be a smoothing operator as in \Cref{sec:Theoretical Analysis}, with $M = \operatorname{diag}(m)$. Theorem \ref{thm:Symmetric Normalization} guarantees the existence of a diagonal matrix $\Lambda = \operatorname{diag}(\lambda)$ such that $Q = \Lambda KM \Lambda$ is a diffusion operator.

Consider an infinitesimal variation $m \to m + dm$ of $m$. This induces a variation of the scaling factors $\lambda \to \lambda + d\lambda$. To find the relationship between them, we differentiate the row-stochasticity condition  $\sum_j \lambda_i \lambda_j k_{ij} m_j = 1$. We obtain

\begin{equation}
\frac{d\lambda_i}{\lambda_i} + \sum_j \lambda_i k_{ij} m_j \lambda_j \frac{d\lambda_j}{\lambda_j} + \sum_j \lambda_i k_{ij} m_j \lambda_j \frac{dm_j}{m_j} = 0.
\end{equation}

Since $Q_{ij}=\lambda_i k_ij m_j \lambda_j$, this simplifies to

\begin{equation}\label{eq:infinitesimal}
\frac{d\lambda}{\lambda} + Q\frac{d\lambda}{\lambda} + Q\frac{dm}{m} = 0 
\quad \implies \quad 
\frac{d\lambda}{\lambda} = -(\operatorname{Id} + Q)^{-1}Q\frac{dm}{m}.
\end{equation}

This shows that the relative error in mass propagates to the Sinkhorn scaling factors through the operator $-(\operatorname{Id} + Q)^{-1}Q$.

\paragraph{Stability Bound.} We can bound the amplification of this error using the $M$-spectral norm $\norm{\cdot}_M$. Since $Q$ is $M$-symmetric with eigenvalues in $(0,1]$, $A = (\operatorname{Id} + Q)^{-1}Q$ is also $M$-symmetric, and its eigenvalues are in the form $\frac{\mu}{1+\mu}$ where $\mu \in (0,1]$.
Since the function $x \mapsto \frac{x}{1+x}$ is strictly increasing and bounded by $1/2$ on $[0,1]$, the operator norm is also bounded:
\begin{equation}
\norm{\frac{d\lambda}{\lambda}}_M \leq \frac{1}{2} \norm{\frac{dm}{m}}_M.
\end{equation}
This proves that symmetric Sinkhorn normalization is \emph{stable}: relative errors in the mass matrix are damped by a factor of at least 2 in the scaling factors, preventing numerical explosion.

\paragraph{Operator Variation.} Finally, the first-order variation $dQ$ of $Q$ is given by
\begin{equation} 
dQ = \operatorname{diag}\left(\frac{d\lambda}{\lambda}\right)Q + Q\operatorname{diag}\left(\frac{dm}{m}\right) + Q\operatorname{diag}\left(\frac{d\lambda}{\lambda}\right).
\end{equation}
Using Eq.~\eqref{eq:infinitesimal} in this expression shows that variations in $Q$ depend linearly on the relative log-variations of the mass $m$, with coefficients bounded by the spectral properties of $Q$.

\paragraph{Experiment.} We test this theoretical result experimentally. We introduce multiplicative log-normal noise to the (lumped) masses of the Armadillo mesh: $(m_{\sigma})_i = m_i e^{\epsilon_i} e^{-\sigma^2/2}$, where $\epsilon_i \sim \mathcal{N}(0, \sigma^2)$. The term $e^{-\sigma^2/2}$ ensures the expectation remains unbiased, \ie $\mathbb{E}_{\epsilon_i}\left[(m_\sigma)_i\right] = m_i$.

In \Cref{fig:M_noise}, we plot the relative error $\norm{Q_m - Q_{m_\sigma}}_M$ against the noise level $\sigma$, where $Q_m$ and $Q_{m_\sigma}$ are the diffusion operators associated to $KM$ and $KM_\sigma$ respectively.  We use a Gaussian kernel K of bandwidth $0.05$. As predicted by our derivation, the error scales linearly with the input noise magnitude and remains well-controlled even for significant perturbations, confirming the robustness of the method to mass estimation errors.




\begin{figure}
    \centering
    \includegraphics[width=0.5\linewidth]{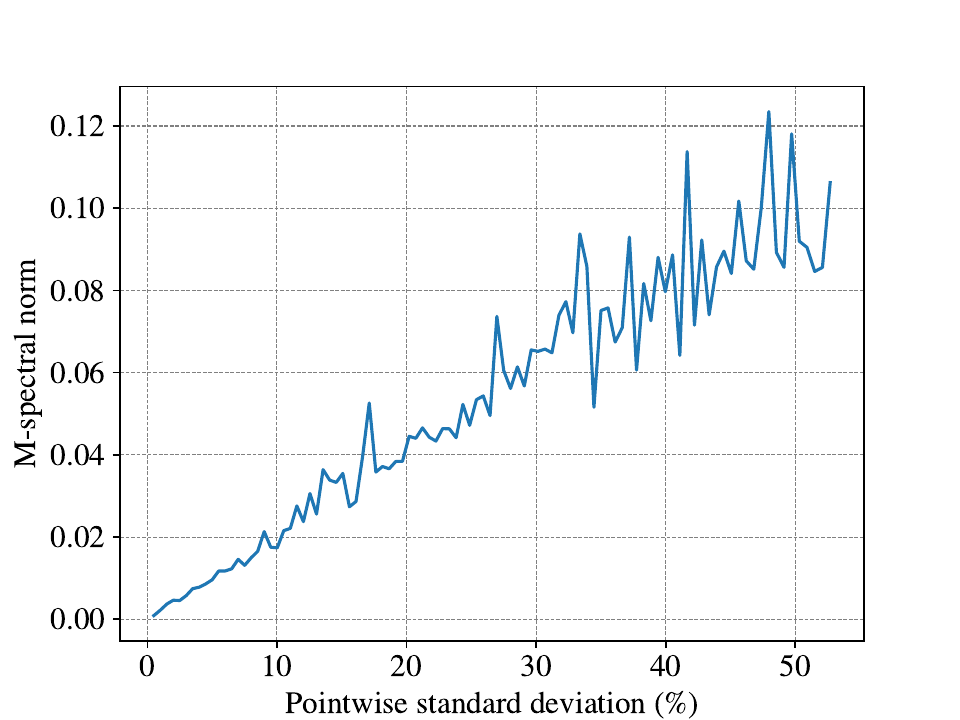}
    \caption{Sensitivity of the diffusion operator to mass perturbations. We evaluate the stability of our operator on the Armadillo mesh by applying multiplicative log-normal noise to the mass matrix $M$. The x-axis shows pointwise standard deviation of the noise (in \%), and the y-axis the relative error, measured as the $M$-spectral norm of the difference between the unperturbed and perturbed operators ($\norm{Q_m - Q_{m_\sigma}}_M$). The error scales linearly with the noise magnitude, empirically confirming that the symmetric Sinkhorn normalization prevents the amplification of mass estimation errors.}
    \label{fig:M_noise}
\end{figure}

\section{Q-Diffusion in Practice}
\label{sec:qoperator_practice}

\paragraph{Equivalence to Symmetric Sinkhorn.} \Cref{algo:Symmetric Sinkhorn} simply solves the fixed point equation $\Lambda K M \Lambda 1 = 1$. Since $M$ and $\Lambda$ are diagonal, they commute and therefore the problem is equivalent to $\Lambda M K M \Lambda 1 = M1$. This boils down to the standard symmetric Sinkhorn algorithm from~\citep{knightSymmetryPreservingAlgorithm2014} applied to matrix $MKM$, with marginals $M1=\text{diag}(M)$. This equivalence explains the fast convergence in 5 to 10 iterations we observe in practice, which was studied in~\citep{knightSymmetryPreservingAlgorithm2014}.

\paragraph{Sinkhorn Convergence.}
We evaluate the convergence behavior of the symmetrized Sinkhorn algorithm across various settings. Specifically, we monitor the quantity:
\begin{equation}
\frac{\int \abs{\Lambda^{(i)}S\Lambda^{(i)} 1 - 1} \diff\mu}{\int \diff \mu }
\end{equation}
where $\Lambda^{(i)}$ denotes the diagonal scaling matrix after $i$ Sinkhorn iterations. 
This corresponds to the average deviation between the constant signal $1$ and its smoothed counterpart $Q^{(i)}1 = \Lambda^{(i)}S\Lambda^{(i)} 1$ on the domain that is defined by the positive measure $\mu$.
According to our definition, both signals coincide when $Q^{(i)}$ is a smoothing operator.
\Cref{fig:Convergence plot} presents these results, with visualizations of the input modalities along the top row and corresponding convergence curves below.
In \Cref{fig:convergence_armadillo}, we report results for different representations of the Armadillo shape used in the main paper: uniform point cloud samples on the surface and volume, as well as voxel-based representations of the boundary and interior.
\Cref{fig:convergence_random} illustrates the behavior on Erdős–Rényi random graphs with edge probability $p = 0.2$, and \Cref{fig:convergence_geometric} shows results on geometric graphs, where points are uniformly sampled in the unit square and edges are drawn between points within radius $r = 0.15$.

Across all experiments, we observe rapid convergence: typically, 5 to 10 iterations are sufficient to reach error levels below $10^{-3} = 0.1\%$.
We note that one iteration of our algorithm corresponds to the classical symmetric normalization of graph Laplacians, which satisfies our constant preservation property up to a precision of $1\%$ to $5\%$.
From this perspective, we understand our work as a clarification of the literature on graph Laplacians.
While most practitioners are used to working with \emph{approximate} normalization, we provide a clear and affordable method to satisfy this natural axiom up to an \emph{arbitrary tolerance} parameter. We argue that this is preferable to choosing between row-wise normalization (which guarantees the preservation of constant signals, but discards symmetry) and standard symmetric normalization (which makes a small but noticeable error on the preservation of constant signals). 

\begin{figure}
  \centering
     \begin{subfigure}[b]{0.3\textwidth}
     \hspace*{.2cm}
        \includegraphics[width=\textwidth]{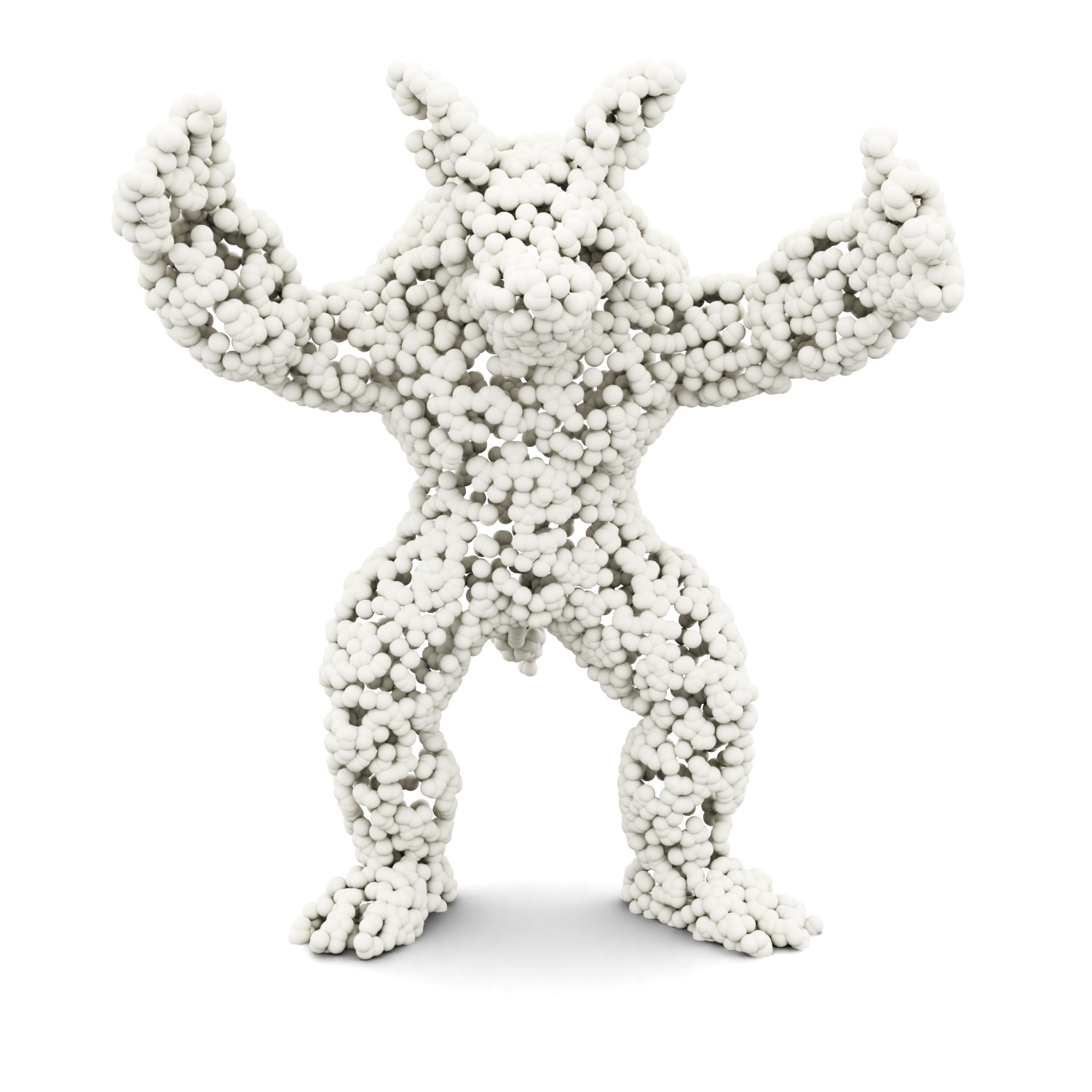}
    \end{subfigure}
    \hfill
    \begin{subfigure}[b]{0.27\textwidth}
        \includegraphics[width=\textwidth]{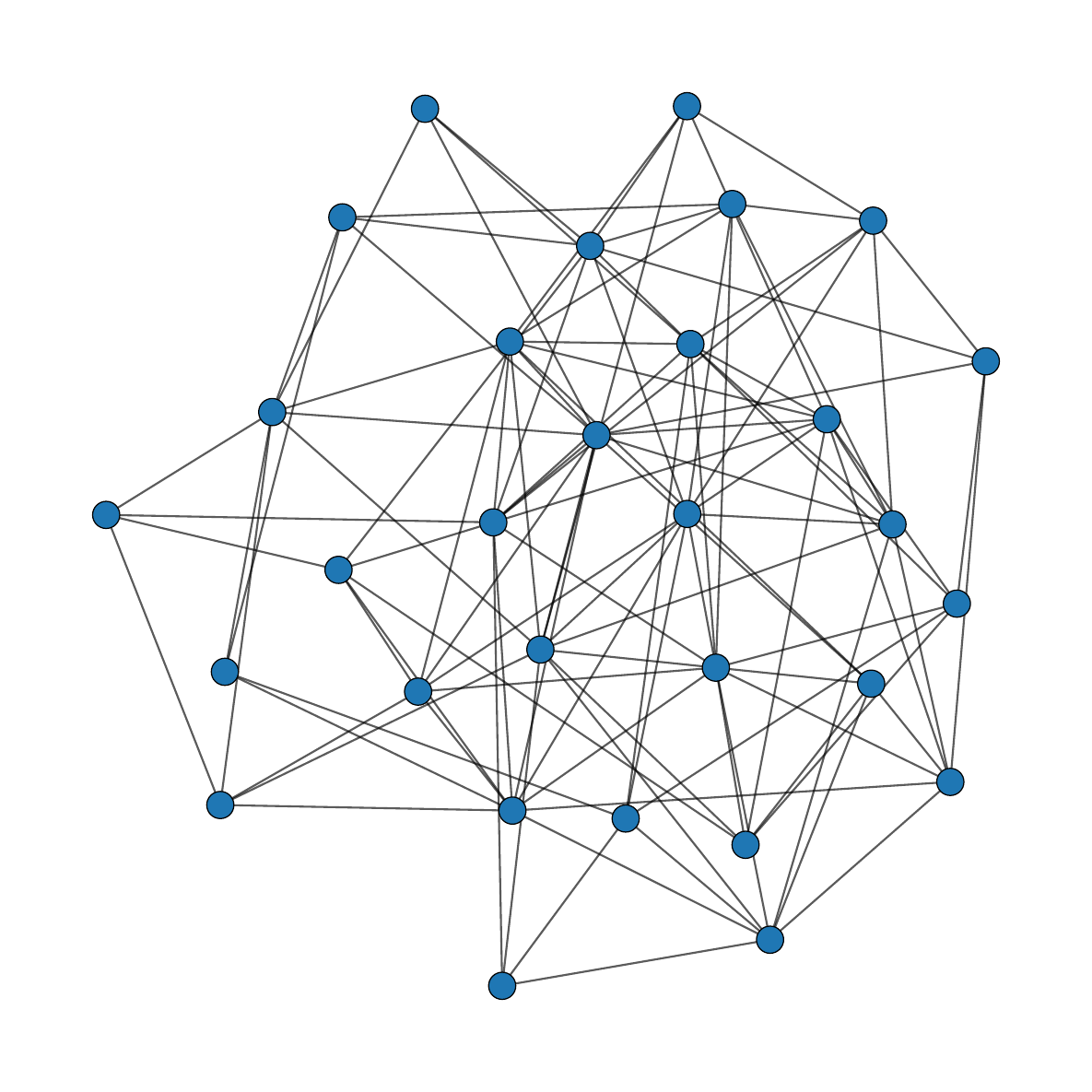}
    \end{subfigure}
    \hfill
    \begin{subfigure}[b]{0.27\textwidth}
        \includegraphics[width=\textwidth]{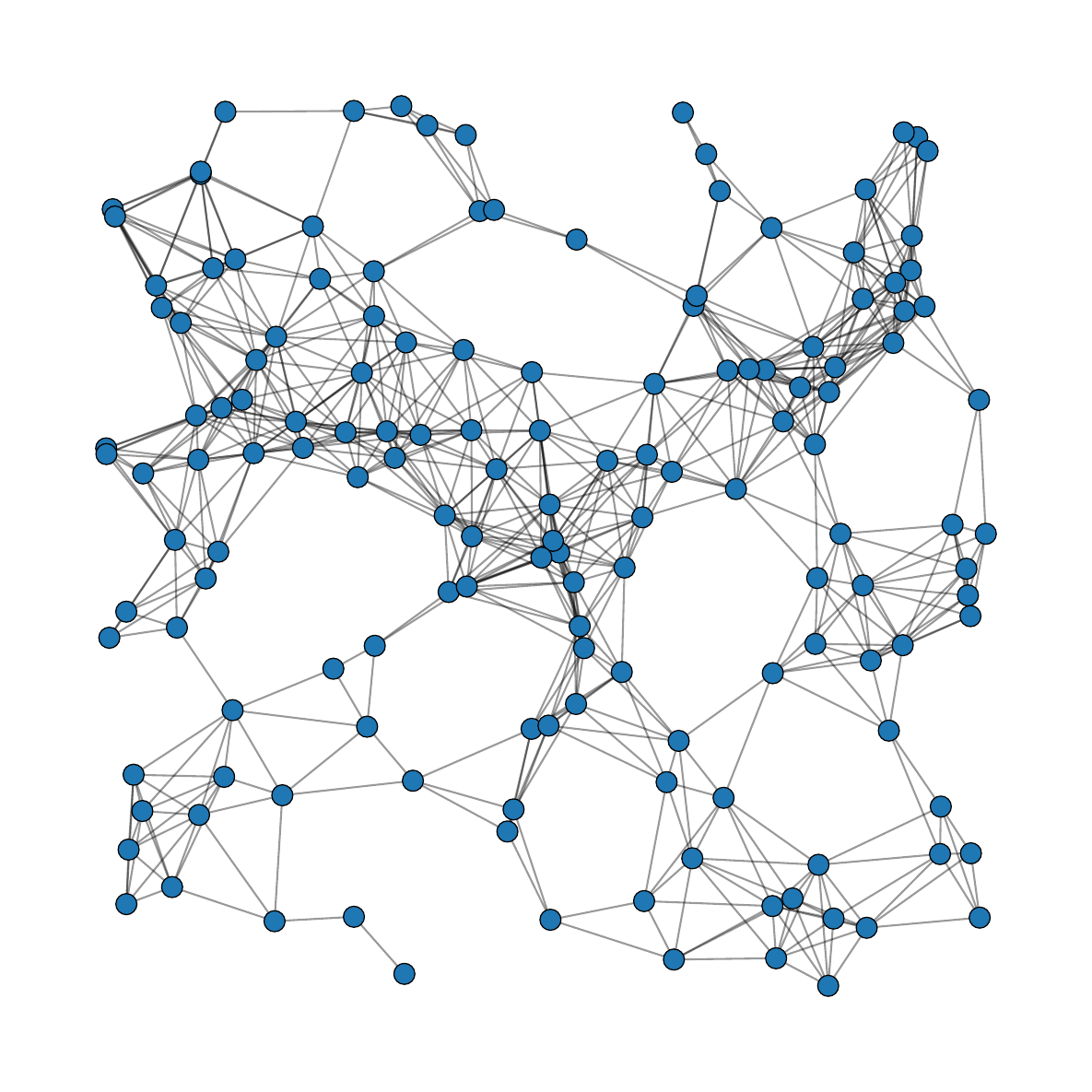}
        
    \end{subfigure}
    \hfill

    \begin{subfigure}[b]{0.3\textwidth}
        \includegraphics[width=\textwidth]{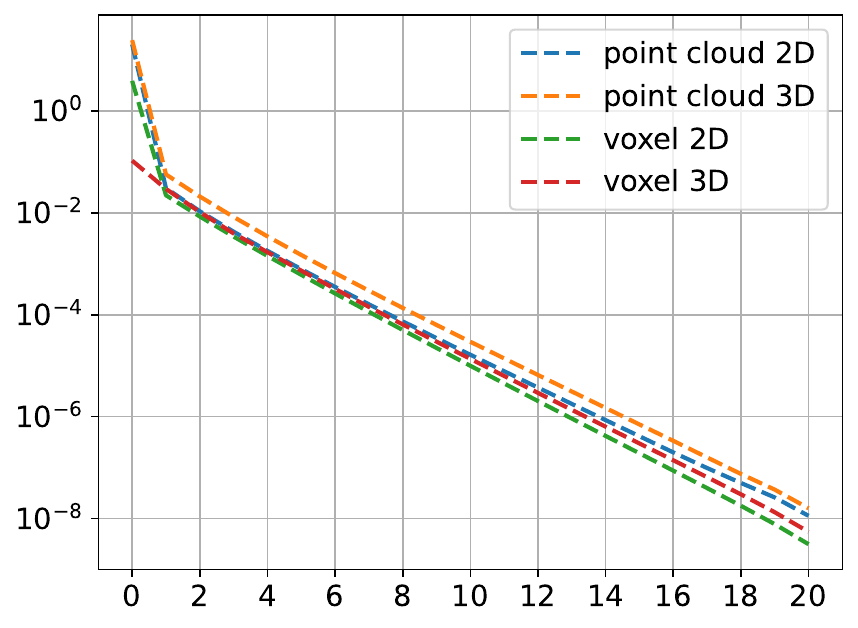}
        \caption{\label{fig:convergence_armadillo} Armadillo}
    \end{subfigure}
    \hfill
    \begin{subfigure}[b]{0.3\textwidth}
        \includegraphics[width=\textwidth]{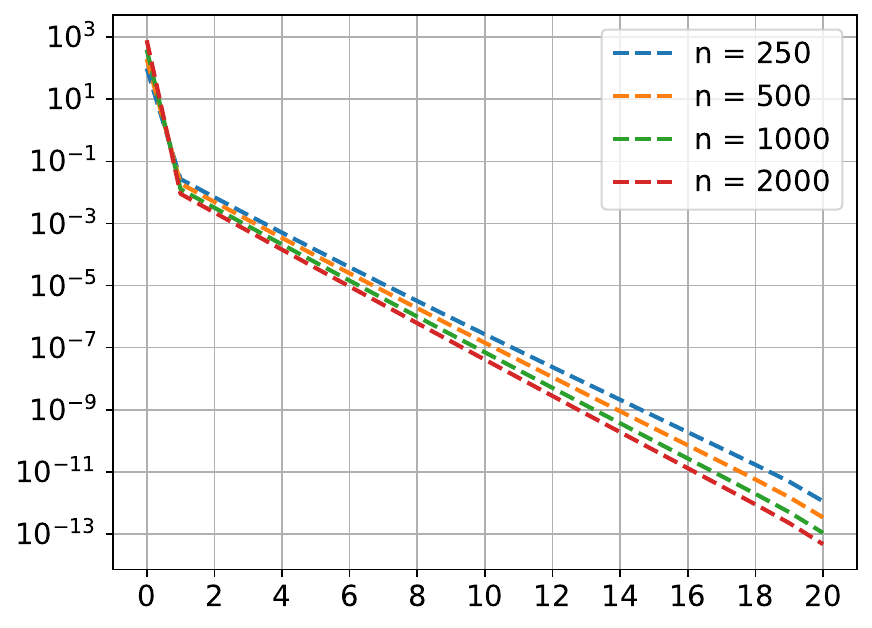}
        \caption{\label{fig:convergence_random} Erdős–Rényi}
    \end{subfigure}
    \hfill
    \begin{subfigure}[b]{0.3\textwidth}
        \includegraphics[width=\textwidth]{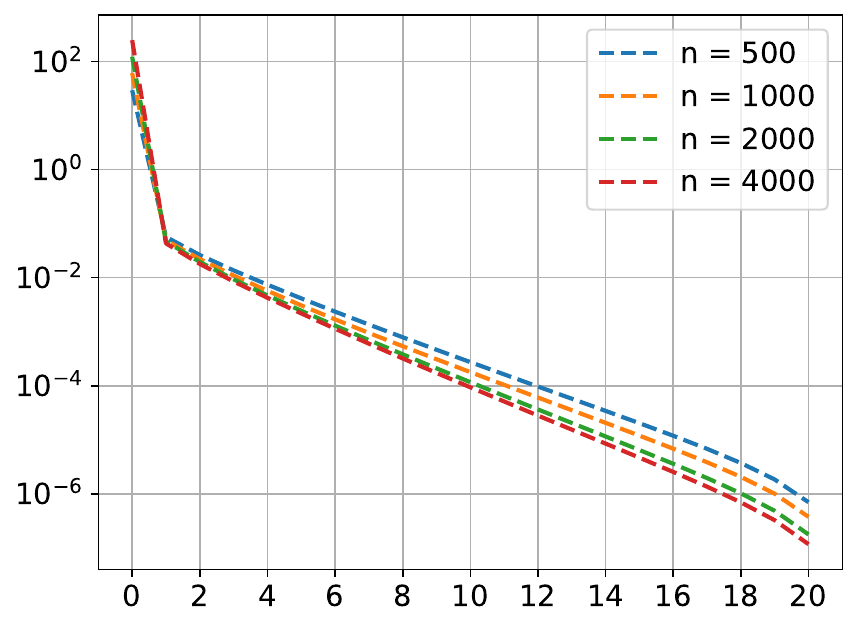}
        \caption{\label{fig:convergence_geometric} Geometric}
        
    \end{subfigure}
  \caption{\label{fig:Convergence plot}Convergence of the symmetric Sinkhorn algorithm on (a)~the Armadillo shape, (b)~a graph with $N$ nodes and random edges, (c)~a random geometric graph with $N$ nodes}
\end{figure}

\paragraph{Implementing Q-Diffusion on Discrete Samples.} Let $K\in\RR^{N\times N}$ be a symmetric kernel matrix, and $M=\text{diag}(m_1,\dots,m_N)$ be a diagonal mass matrix. As described in Algorithm~1, we compute a diagonal scaling matrix $\Lambda=\text{diag}(e^{\ell_1},\dots,e^{\ell_N})$, such that the normalized diffusion operator becomes $Q = \Lambda K M \Lambda$.
This operator can be implemented efficiently using the KeOps library~\citep{charlierKernelOperationsGPU2021,feydyFastGeometricLearning2020}, which avoids instancing the dense kernel matrix.

In the case where $K$ is a Gaussian kernel between points $x_i$ in $\RR^d$, with standard deviation $\sigma > 0$, applying $Q$ to a signal $f\in\RR^N$ gives:
\begin{align}
    (Qf)_i &= \sum_{j=1}^N \exp\left(-\tfrac{1}{2\sigma^2}\norm{x_i-x_j}^2 + \ell_i + \ell_j \right)m_jf_j \\
           &= \sum_j \exp\left(q_{ij}\right)f_j \quad\text{where } q_{ij} := -\tfrac{1}{2\sigma^2} \|x_i - x_j\|^2 + \ell_i + \ell_j + \log m_j.
\end{align}
Since $Q$ is row-normalized by construction, (i.e., $Q1=1$), this operation can be written as a softmax-weighted sum:
\begin{equation}
    (Qf)_i = \sum_{j=1}^N \operatorname{SoftMax}_j\left(q_{ij}\right) f_j~.
\end{equation}

We note that the scores $q_{ij}$ can be expressed as inner products between extended embeddings $\tilde{x}_i,\tilde{y_i}\in\RR^{d+2}$, enabling fast attention implementations:
\begin{equation}
    q_{ij} = \tilde{x}_i^\top \tilde{y}_j~,~\text{where}~
    \tilde{x}_i =
        \begin{pmatrix}
            \frac{1}{\sigma}x_i \\
            \ell_i - \frac{1}{2\sigma^2}\|x_i\|^2\\
            1 \\
        \end{pmatrix}
~\text{and}~
    \tilde{y}_j = 
        \begin{pmatrix}
            \frac{1}{\sigma}x_j\\
            1 \\
             \ell_j - \frac{1}{2\sigma^2}\|x_j\|^2 + \log(m_j)
        \end{pmatrix}~.
\end{equation}
This leads to an attention-style formulation of the operator:
\begin{equation}
    Qf = \operatorname{Attention}\left(\tilde{X}, \tilde{Y}, f\right)
\end{equation}
where $\tilde{X}, \tilde{Y} \in\RR^{N\times (d+2)}$ are the stacked embeddings of all points. This makes $Qf$ compatible with fast attention layers such as FlashAttention~\citep{daoFlashAttention2FasterAttention2023} or xFormers~\citep{xFormers2022}.
Note that the softmax normalization in the $\text{Attention}$ layer is invariant to additive constants in $\tilde{x}_i$, allowing the implementation to be further simplified using only a $(d+1)$-dimensional embeddings for $\tilde{X}$ and $\tilde{Y}$

\paragraph{Spectral Decomposition.} The largest eigenvectors of a symmetric matrix can be efficiently computed using the power method or related iterative solvers~\citep{saadNumericalMethodsLarge2011}. However, standard routines typically assume symmetry with respect to the standard inner product.  Since our diffusion operator $Q = \Lambda K M \Lambda$ is symmetric with respect to the $M$-weighted inner product, we need to reformulate the problem. Noting that 
$M$ and $\Lambda$ are diagonal and therefore commute, we can write
\begin{equation}
    Q = M^{-1} \left(\Lambda M K M \Lambda\right)~.
\end{equation}
This allows us to compute the eigenvectors and eigenvalues of $Q$ by solving the following generalized eigenproblem for symmetric matrices:
\begin{equation}
    (\Lambda M K M \Lambda) \Phi = \lambda M \Phi~.
\end{equation}
This is supported by standard linear algebra routines, such as \texttt{scipy.sparse.linalg.eigsh}~\cite{2020SciPy-NMeth}, and ensures that the resulting eigenvectors $\Phi$ are orthogonal with respect to the $M$ inner product.

\section{Runtimes}\label{sec:runtimes}

\begin{table}
  \caption{CPU runtime benchmark comparing the Implicit Euler solver and KNN-accelerated Sinkhorn. All times are reported in seconds, with KNN precomputation times indicated in parentheses.}
  \label{tab:sinkhorn CPU runtimes}
  \centering
\small
\setlength{\tabcolsep}{4pt}
\renewcommand{\arraystretch}{1.1}
\begin{tabular}{rrrr}
  \toprule
  {$N$} & {\textbf{Implicit Euler}} & {\textbf{Sinkhorn ($k=25$)}} & {\textbf{Sinkhorn ($k=100$)}} \\
  \midrule
  10,000    & 0.428  & 0.010 (+0.117) & 0.022 (+0.274) \\
  100,000   & 2.921  & 0.064 (+0.612) & 0.338 (+1.819) \\
  1,000,000 & 71.776 & 0.971 (+6.004) & 5.199 (+18.351) \\
  \bottomrule
\end{tabular}
\end{table}

\paragraph{Setup.}
We time \textbf{5 iterations} of the symmetric Sinkhorn normalization on point-cloud kernels using PyTorch + PyKeOps (symbolic lazy tensors) on an NVIDIA V100 (CUDA 12.1). For reference, we also time implicit Laplacian diffusion via a sparse LU solve of $(M{+}t\Delta)$ on an Intel Xeon Gold~6248 CPU. This reflects typical usage: kernel mat–vecs map well to GPUs, whereas sparse direct solvers are mature and memory-efficient on CPUs when a Laplacian is available.

\paragraph{What is timed.}
Sinkhorn: each iteration = one mat–vec with $S$ + a diagonal rescaling; we report wall-clock for 5 iterations.  
Laplacian: factorization + one solve of $(M{+}t\Delta)^{-1}b$ on CPU (best-case when a sparse $\Delta$ exists). Our GPU timings use a brute-force kernel on an unstructured 3D point cloud, and could be further improved.

\paragraph{Dense GPU baselines.}
On the GPU, we also measured a Cholesky factorization (210\,ms) and a matrix exponential (770\,ms) of a Laplacian in PyTorch and 10000 vertices. These approaches exceed GPU memory limits beyond $\sim$10k points.

\paragraph{CPU Runtimes.} When running on CPU, standard dense Sinkhorn can become prohibitively slow with large number of samples. A standard strategy, in particular in our setting, is to use a precomputed K-nearest neighbor graph and run Sinkhorn on a sparse matrix with only $NK$ elements. On \Cref{tab:sinkhorn CPU runtimes}, we display CPU runtimes for Implicit Euler and sparse Sinkhorn using $k=25$ and $k=100$ neighbors. Evaluations were performed on a dual-socket Intel Xeon E5-2695 v4, and show that sparse sinkhorn remains significantly faster than Implicit Euler.

\paragraph{Sinkhorn Complexity.}
Per Sinkhorn iteration for different methods:
\begin{enumerate}[label=(\roman*), nosep, leftmargin=2em]
    \item \textbf{Dense matrices} $S$: $O(N^2)$ time/memory
    \item \textbf{Symbolic kernel} $S$ (e.g., Gaussian with PyKeOps): $O(N^2)$ time, $O(N)$ memory
    sparse $S$ ($k$-NN): $O(kN)$ time  (generally $O(\mathrm{nnz})$)
    \item \textbf{Low-rank} (rank $R$): $O(RN^2)$
    \item \textbf{Grid convolution}: $O(N)$ for small filters, $O(N\log N)$ for large filters using FFTs
\end{enumerate}

\paragraph{Baseline Complexity.}
Per diffusion step via Laplacian-based methods:
\begin{enumerate}[label=(\roman*), nosep, leftmargin=2em]
    \item \textbf{Matrix exponential (dense)}: $O(N^3)$ time; rarely used at scale.
    \item \textbf{Implicit Euler} $(I{+}t\Delta)^{-1}$ with sparse LU/Cholesky: worst case $O(N^3)$; for mesh-like sparsity typically $O(N^{1.5})$ for factorization and $O(N^2)$ per solve (amortizable across right-hand sides).
    \item \textbf{Spectral truncation} (rank $R$): $O(RN^2)$ in the dense setting; truncation may introduce ringing artifacts.
\end{enumerate}
These baselines assume access to a well-defined sparse Laplacian and specialized linear algebra routines.

\section{Details on Eigenvectors Computation}
\label{sec:eigenvectors}

\paragraph{FEM Laplacian on Tetrahedral Meshes.}
In \Cref{fig:Smoothing Eigenvectors} of the main paper, we implement our method on different representations of the Armadillo, treated as a surface and as a volume with uniform density.
For the surface mesh in \Cref{fig:Smoothing Eigenvectors}a, we use the standard cotangent Laplacian as a reference.
For the tetrahedral mesh shown in \Cref{fig:Smoothing Eigenvectors}f, we use a finite element Laplacian, generalizing the cotangent Laplacian in 2D~\citep{crane2019n}. Let~$\{e_i\}$ be a basis of piecewise linear basis and $\{de_i\}$ their gradients. The discrete Laplacian $\Delta$ takes the form: 
\begin{equation}
    \Delta = M^{-1}L~,
\end{equation}
where $L$ is the stiffness matrix and $M$ is the mass matrix defined by:
\begin{equation}
    L_{ij} = \langle d e_i, d e_j \rangle~, \quad M_{ij} = \langle e_i, e_j \rangle~.
\end{equation}
Following~\citet{crane2019n}, we compute the off-diagonal entries of $L$ with:
\begin{equation}
    L_{ij} = \frac{1}{6} \sum_{ijkl \in \mathcal{T}} l_{kl} \cot(\theta_{kl}^{ij})~,
\end{equation}
where~$\mathcal{T}$ is the set of tetrahedra in the mesh, $l_{kl}$ is the length of edge~$kl$, and~$\theta_{kl}^{ij}$ is the dihedral angle between triangles~$ikl$ and~$jkl$. The diagonal entries are defined to make sure that the rows sum to zero:
\begin{equation}
    L_{ii} = -\sum_{j \ne i} L_{ij}~.
\end{equation}
The entries of the mass matrix~$M$ are given by:
\begin{equation}
    M_{ij} = \frac{1}{20}\sum_{ijkl \in \mathcal{T}} \operatorname{vol}(ijkl) \ \text{ for } i \neq j~, \quad
    M_{ii} = \frac{1}{10}\sum_{ijkl \in \mathcal{T}} \operatorname{vol}(ijkl)~.
\end{equation}

\paragraph{Point Clouds.}
As discussed in the main paper, we compare the eigendecompositions of these cotan Laplacians to that of our normalized Gaussian smoothings on discrete representations of the Armadillo.
For the sake of simplicity, \Cref{fig:Smoothing Eigenvectors}b and~\ref{fig:Smoothing Eigenvectors}g correspond to uniform discrete samples,
i.e. weighted sums of Dirac masses:
\begin{equation}
    \mu~=~\sum_{i=1}^N \tfrac{1}{N}\delta_{x_i}~,
\end{equation}
where $x_1, \dots, x_N$ correspond to $N=5{\,}000$ three-dimensional points drawn at random on the triangle mesh (for \Cref{fig:Smoothing Eigenvectors}b) and in the tetrahedral volume (for \Cref{fig:Smoothing Eigenvectors}g).

\paragraph{Gaussian Mixtures.}
To compute the Gaussian mixture representations of Figures~3c and~3h, we simply rely on the Scikit-Learn implementation of the EM algorithm with K-Means++ initialization~\citep{scikit-learn} and 500 components. This allows us to write:
\begin{equation}
    \mu ~=~ \sum_{i=1}^{500} m_i\,\mathcal{N}(x_i, \Sigma_i)~,
\end{equation}
where the scalars $m_i$ are the non-negative mixture weights, 
the points $x_i$ are the Gaussian centroids and the 3-by-3 symmetric matrices $\Sigma_i$ are their covariances.

\paragraph{Mass Estimation on Voxel Grids.}

To encode the Armadillo's \emph{volume} as a binary mask in Figure~3i,
we simply assign a mass of $1$ to voxels that contain points inside of the watertight Armadillo surface. This allows us to demonstrate the robustness of our implementation, even when voxel values do not correspond to the exact volume of the intersection between the tetrahedral mesh and the cubic voxel.

However, this approach is too simplistic when representing the \emph{surface} of the Armadillo with voxels. Since the grid is more densely sampled along the $xyz$ axes than in other directions, assigning a uniform mass of $1$ to every voxel that intersects the triangle mesh would lead to biased estimates of the mass distribution. 
To address this quantization issue, we use kernel density estimation to assign a mass $m_i$ to each voxel.

As described above, we first turn the triangle mesh into a binary mask. Then, for every non-empty voxel $x$, we use an isotropic Gaussian kernel $k$ with standard deviation $\sigma$ equal to $3$ voxels to estimate a voxel mass $m(x)$ with:
\begin{equation}
    m(x) = \frac{1}{\sum_y k(x,y)}
\end{equation}
where the sum is taken over neighboring, non-empty voxels.

\begin{figure}
  \centering
  \includegraphics[width=.97\linewidth]{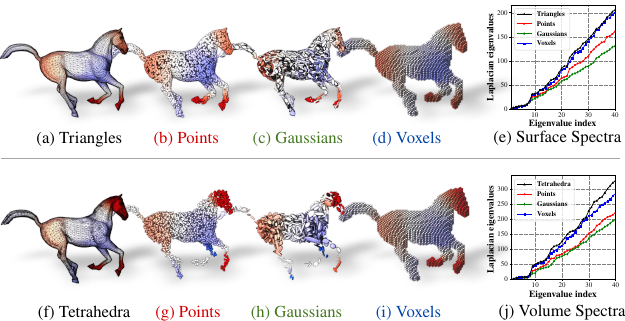}
  \caption{\label{fig:Smoothing Eigenvectors Horse} Spectral analysis on the galloping horse~\citep{sumnerDeformationTransferTriangle2004} 
  normalized to the unit sphere, treated as a surface (top) and volume (bottom).
  We compare the reference cotan Laplacian~(a,f) to our normalized diffusion operators on clouds of $5{\,}000$ points~(b,g), mixtures of $500$ Gaussians~(c,h) and binary voxel masks~(d,i),
  all using a Gaussian kernel of radius $\sigma = 0.05$ (edge length of a voxel). 
 We display the 8\,th eigenvector~(a–d,f–i) and the first 40 eigenvalues~(e,j).
  }
  \vspace*{-.2cm}
\end{figure}

\begin{figure}
    \centering
    \includegraphics{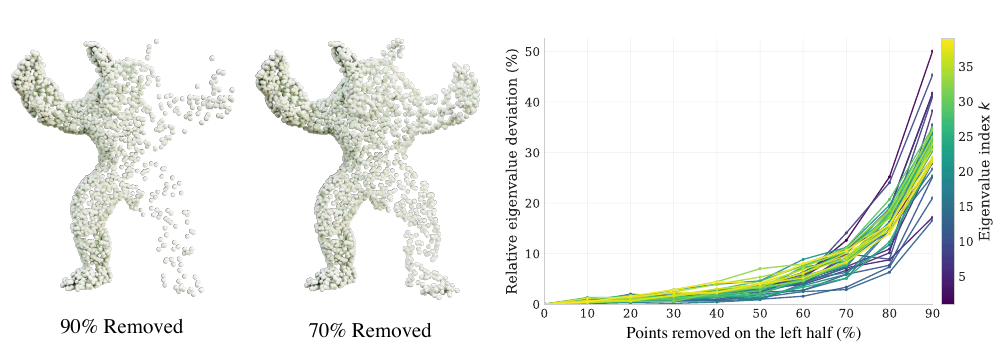}
    \caption{Stability of the $Q$ operator with varying sampling density. (Left) The Armadillo shape with $70\%$ and $90\%$ of the points removed from its left half. (Right) Relative distortion of the first 40 estimated Laplacian eigenvalues with a varying downsampling ratio. Mass is re-estimated using Kernel Density Estimation at each level. The low-frequency spectrum remains highly stable even under severe nonuniform degradation.}
    \label{fig:downsampling stability}
\end{figure}

\begin{figure}
    \centering
    \includegraphics[width=.9\linewidth]{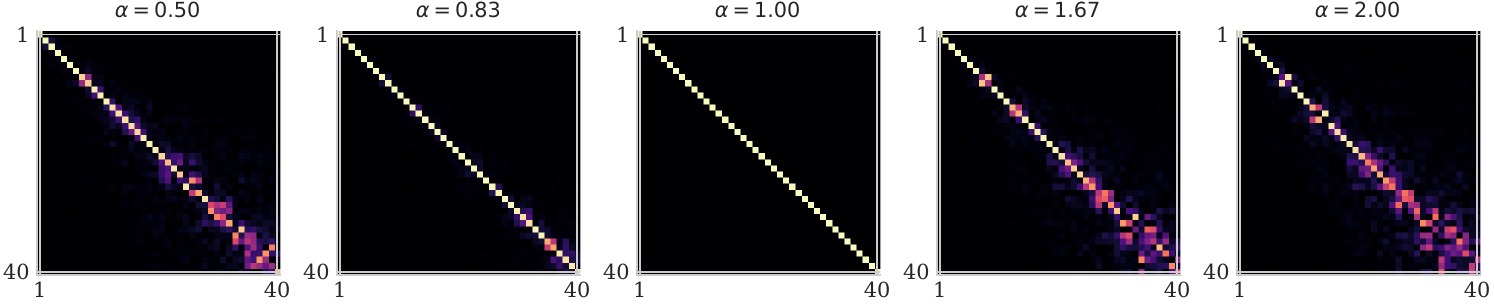}
    \caption{Functional maps computed between the first 40 eigenvectors of the normalized diffusion operator at the baseline scale $\sigma_0$ and perturbed scales ($\sigma=\alpha\sigma_0$). The near-diagonal structure indicates that the low-frequency eigenspaces are stable under this amount of noise.}
    \label{fig:bandwidth stability}
\end{figure}

\begin{figure}
    \centering
    \begin{subfigure}[b]{0.23\textwidth}
        \includegraphics[width=\textwidth]{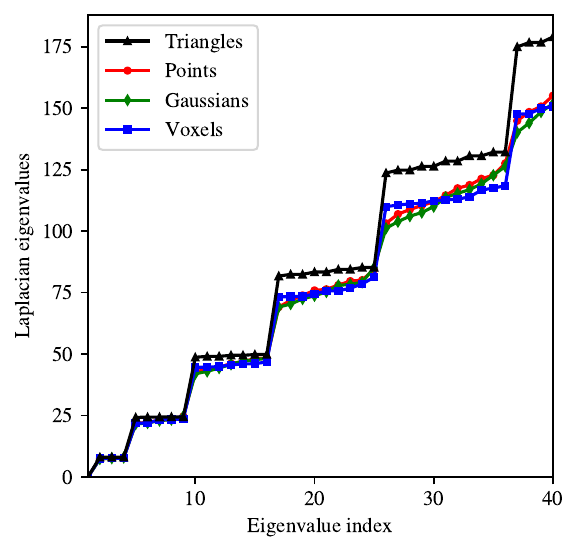}
        \caption{Sphere Surface}
    \end{subfigure}
    \hfill
    \begin{subfigure}[b]{0.23\textwidth}
        \includegraphics[width=\textwidth]{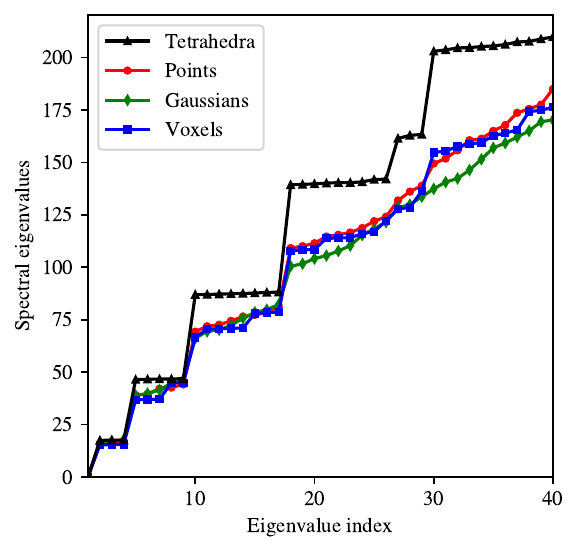}
        \caption{Sphere Volume}
    \end{subfigure}
    \hfill
    \begin{subfigure}[b]{0.23\textwidth}
        \includegraphics[width=\textwidth]{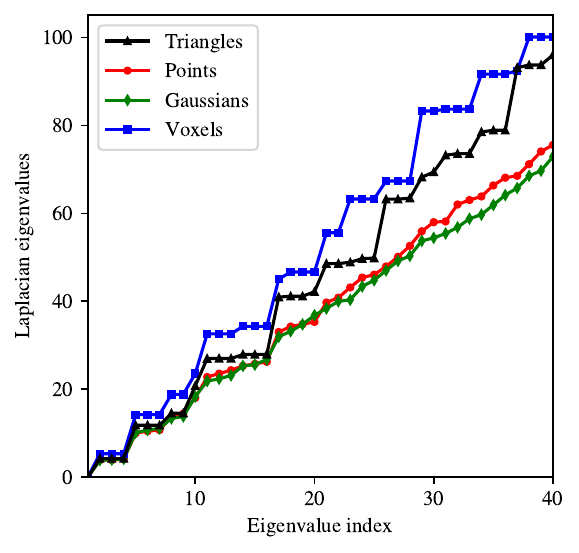}
        \caption{Cube Surface}
    \end{subfigure}
    \hfill
    \begin{subfigure}[b]{0.23\textwidth}
        \includegraphics[width=\textwidth]{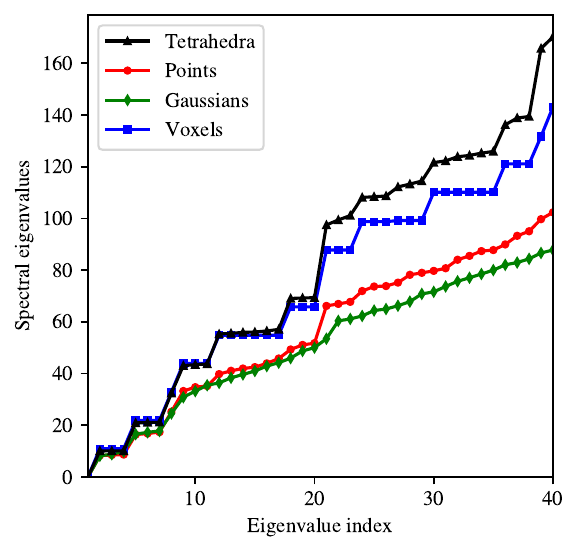}
        \caption{Cube Volume}
    \end{subfigure}
    
    \caption{\label{fig:eigenvalues_sphere_cube} Laplacian eigenvalues for the sphere of diameter $1$ and the cube of edge length $1$.}
    
\end{figure}

\begin{figure}
    \centering
    \includegraphics{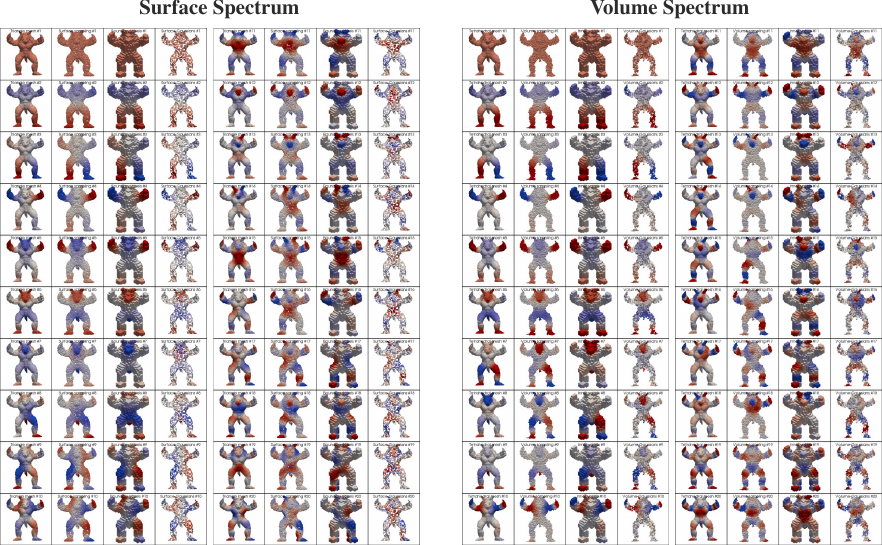}
    \caption{\label{fig:eigenvectors_armadillo_1} First 20 eigenvectors on a the armadillo shape treated as a surface on the left and as a volume on the right. For both volume and surfaces, the shape is encoded as simplex, a point cloud sampled uniformly at random, a voxel grid and a Gaussian mixture.}
\end{figure}

\begin{figure}
    \centering
    \includegraphics{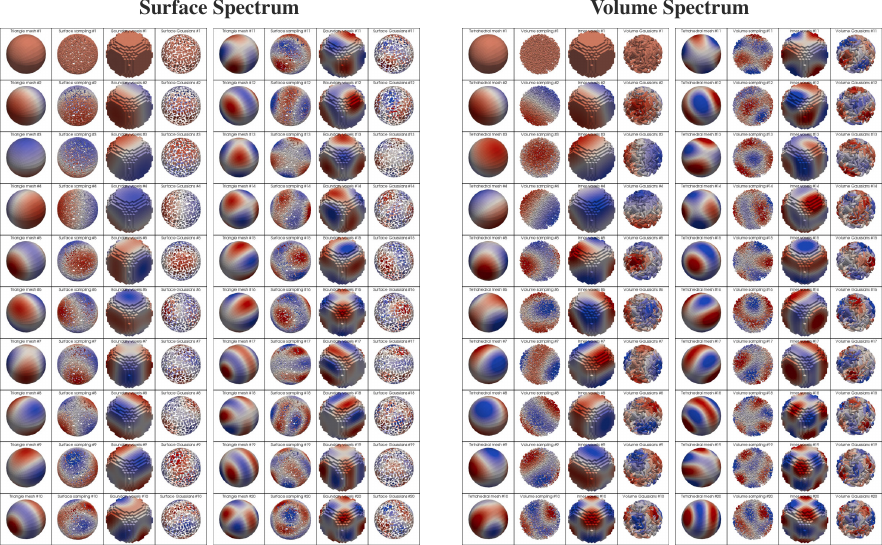}
    \caption{\label{fig:eigenvectors_sphere_1} First 20 eigenvectors on a cube with edge length $1$, treated as a surface on the left and as a volume on the right. For both volume and surfaces, the shape is encoded as simplex, a point cloud sampled uniformly at random, a voxel grid and a Gaussian mixture.}
\end{figure}

\begin{figure}
    \centering
    \includegraphics{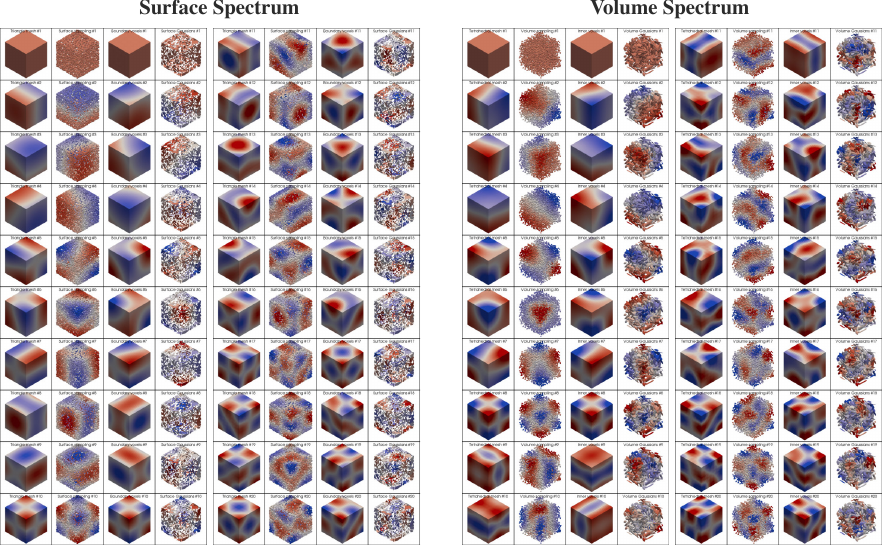}
    \caption{\label{fig:eigenvectors_cube_1} First 20 eigenvectors on a cube with edge length $1$, treated as a surface on the left and as a volume on the right. For both volume and surfaces, the shape is encoded as simplex, a point cloud sampled uniformly at random, a voxel grid and a Gaussian mixture.}
\end{figure}

\paragraph{Estimation of the Laplacian Eigenvalues.}
Recall that with our conventions, the Laplace operator is non-negative.
In both of our settings (surface and volume), performing an eigendecomposition of the reference cotan Laplacian yields an \emph{increasing} sequence of eigenvalues starting at $\lambda_1^\Delta = 0$.
On the other hand, computing the largest eigenvalues of a normalized diffusion operator yields a \emph{decreasing} sequence of eigenvalues
starting at $\lambda_1^Q = 1$.

To compare both sequences with each other and produce the curves of Figure~3e and~3j, we propose the following simple heuristics for diffusions $Q$ derived from a Gaussian kernel of deviation $\sigma > 0$:
\begin{itemize}
    \item For point clouds and voxels, we use:
    \begin{equation}
        \lambda_i~=~ - \frac{2}{\sigma^2} \log(\lambda_i^Q)~.
    \end{equation}
    Indeed, when the underlying measure $\mu$ corresponds to a regular grid with uniform weights, we can interpret the Gaussian kernel matrix as a convolution operator with a Gaussian kernel $\exp(-\|x\|^2/2\sigma^2)$.
    Its eigenvalues can be computed in the Fourier domain
    as $\exp(-\sigma^2\|\omega\|^2/2)$. To recover the eigenvalues
    $\|\omega\|^2$ of the Laplace operator, we simply have to apply a logarithm and multiply by $-2/\sigma^2$.
    
    \item For Gaussian mixtures with component weights $m_i$, centroids $x_i$ and covariance matrices $\Sigma_i$, we use:
    \begin{equation}
        \lambda_i~=~ - \frac{2}{\sigma^2 + \tfrac{2}{d} (\sum_i m_i \text{trace}(\Sigma_i)) / (\sum_i m_i) } \log(\lambda_i^Q)~,
    \end{equation}
    where $d$ is equal to $2$ for surfaces and $3$ for volumes.
    This formula is easy to compute and introduces an additional factor,
    the average trace of the covariance matrices $\Sigma_i$.
    For volumes, it relies on the observation that
    when all covariance matrices are equal to a constant isotropic
    matrix $\Sigma = \tau^2 I_3$ with trace $3\tau^2$,
    the smoothing operator defined in Eq.~(5) of the main paper
    is equivalent to a Gaussian kernel matrix of variance $\sigma^2 + 2 \tau^2 = \sigma^2 + (2/3)\, \text{trace}(\Sigma)$.
    
    Likewise, for surfaces, we expect that a regular sampling will lead to covariance matrices that have one zero eigenvalue (in the normal direction) and two non-zero eigenvalues (in the tangent plane), typically equal to a constant $\tau^2$. This leads to the formula
    $\sigma^2 + 2\tau^2 = \sigma^2 + (2/2) \,\text{trace}(\Sigma)$.
    
\end{itemize}

\paragraph{Spectrum on Animal Shape.}  Similarly to \Cref{fig:Smoothing Eigenvectors},  we display on \Cref{fig:Smoothing Eigenvectors Horse} the 8~th eigenvector for a galloping horse shape from the Sumner dataset~\citep{sumnerDeformationTransferTriangle2004}, using different representation modalities.

\paragraph{Stability to Sampling Density.} In \Cref{fig:downsampling stability}, we analyze the stability of our $Q$ operator to non-uniform sampling densities. We downsample the left half of the armadillo shape and check the absolute relative eigenvalue deviation. At each downsampling level, the mass matrix is re-estimated with kernel density estimation. We note that the spectrum of $Q$ remains particularly stable with up $70\%$ of the samples missing on the left half.

\paragraph{Stability to Bandwith Parameter.} We also study the behavior of the spectrum of $Q$ when varying $\sigma$ by a multiplicative factor $\alpha$ around the baseline scale $\sigma_0$ on the Armadillo point cloud. Since individual eigenvalues and eigenvectors are not directly comparable when the scales $\sigma$ are very different, we instead visualize in \Cref{fig:bandwidth stability} the functional map between the spectrum of $Q$ computed with $\sigma=\sigma_0$ and the one computed with $\sigma=\alpha\sigma_0$. The near diagonal structure shows that the eigenspace are comparable. However, this structure seems to degrade outside of the range $\alpha\in[0.5,2]$.

\paragraph{Estimated Spectrum on Standard Shapes.}
In Figure~\ref{fig:eigenvectors_armadillo_1}, we display the first 20 eigenvectors of our operators defined on the Armadillo, as a complement to Figure~3 in the main paper. As expected, these mostly coincide with each other.

Going further, we perform the exact same experiment with
a sphere of diameter $1$ in Figures~\ref{fig:eigenvectors_sphere_1},
as well as a cube with edge length $1$ in Figures~\ref{fig:eigenvectors_cube_1}.
The relevant spectra are displayed in Figure~\ref{fig:eigenvalues_sphere_cube}. We recover the expected symmetries, which correspond to the plateaus in the spectra and the fact that the eigenvectors cannot be directly identified with each other.
We deliberately choose coarse point cloud and Gaussian mixture representations, which allow us to test the robustness of our approach.
Although the Laplacian eigenvalues tend to have a slower growth on noisy data, the eigenvectors remain qualitatively relevant.

\section{Details on Gradient Flow}
\label{sec:gradient_flows}

\paragraph{The Energy Distance.} Inspired by the theoretical literature on sampling and gradient flows,
we perform a simple gradient descent experiment on the \emph{Energy Distance} between two empirical distributions. Given a source (prior) distribution $\mu = \tfrac{1}{N}\sum_{i=1}^N \delta_{x_i}$ and a target distribution $\nu = \tfrac{1}{M}\sum_{j=1}^M \delta_{y_j}$, this loss function is defined as:

\begin{equation}
    E(\mu, \nu) = \frac{1}{NM} \sum_i^N \sum_j^M \norm{x_i-y_j} -\frac{1}{2N^2}\sum_{i,j=1}^N \norm{x_i-x_j} - \frac{1}{2M^2}\sum_{i,j=1}^M \norm{y_i-y_j}
\end{equation}

When all points are distinct from each other, its gradient with respect to the positions of the source samples is:
\begin{equation}
    \nabla_{x_i}E(\mu,\nu) = \frac{1}{NM} \sum_{j=1}^M \frac{y_j-x_i}{\norm{y_j-x_i}} - \frac{1}{N^2} \sum_{j \neq i} \frac{x_j-x_i}{\norm{x_j-x_i}}
\end{equation}
We implement this formula efficiently using the KeOps library.

\begin{figure}[p] 
    \centering
    \begin{minipage}[b]{\textwidth}
        \centering
        \includegraphics[width=.97\linewidth]{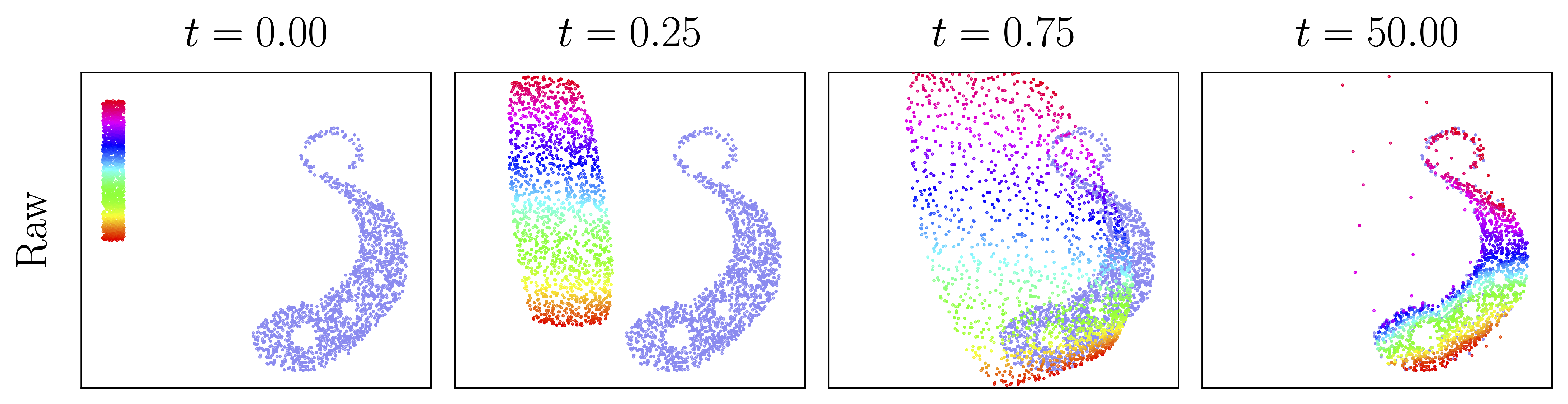}
        \caption{\label{fig:gradients_classic} Gradient flow using the simple Wasserstein metric.}
    \end{minipage}
    \hfill
    \begin{minipage}[b]{\textwidth}
        \centering
        \includegraphics[width=.97\linewidth]{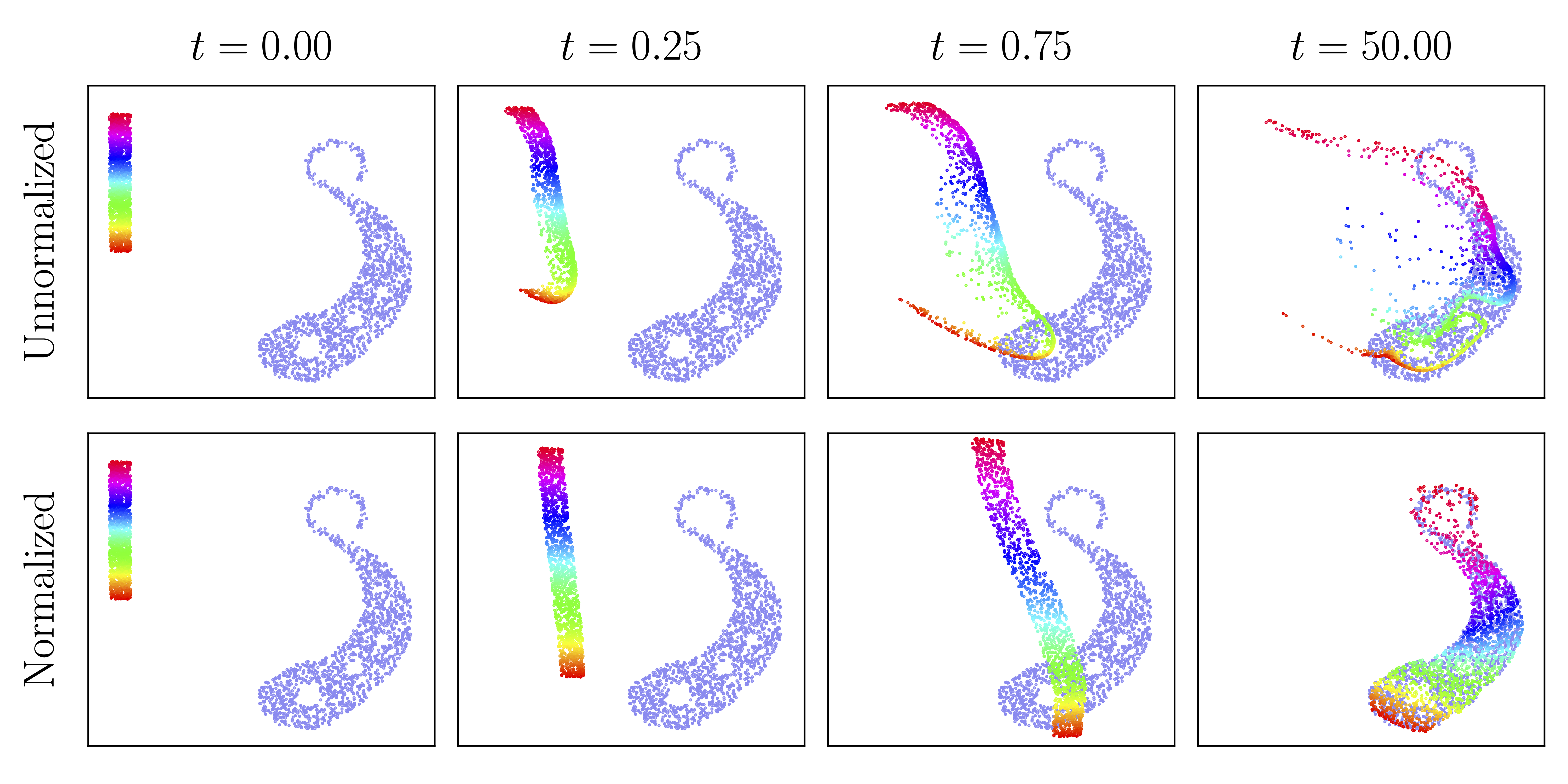}
        \caption{\label{fig:gradients_007} Gradient flow using unnormalized and normalized Gaussian kernels with $\sigma = 0.07$.}
    \end{minipage}
    \hfill
    \begin{minipage}[b]{\textwidth}
        \centering
        \includegraphics[width=.97\linewidth]{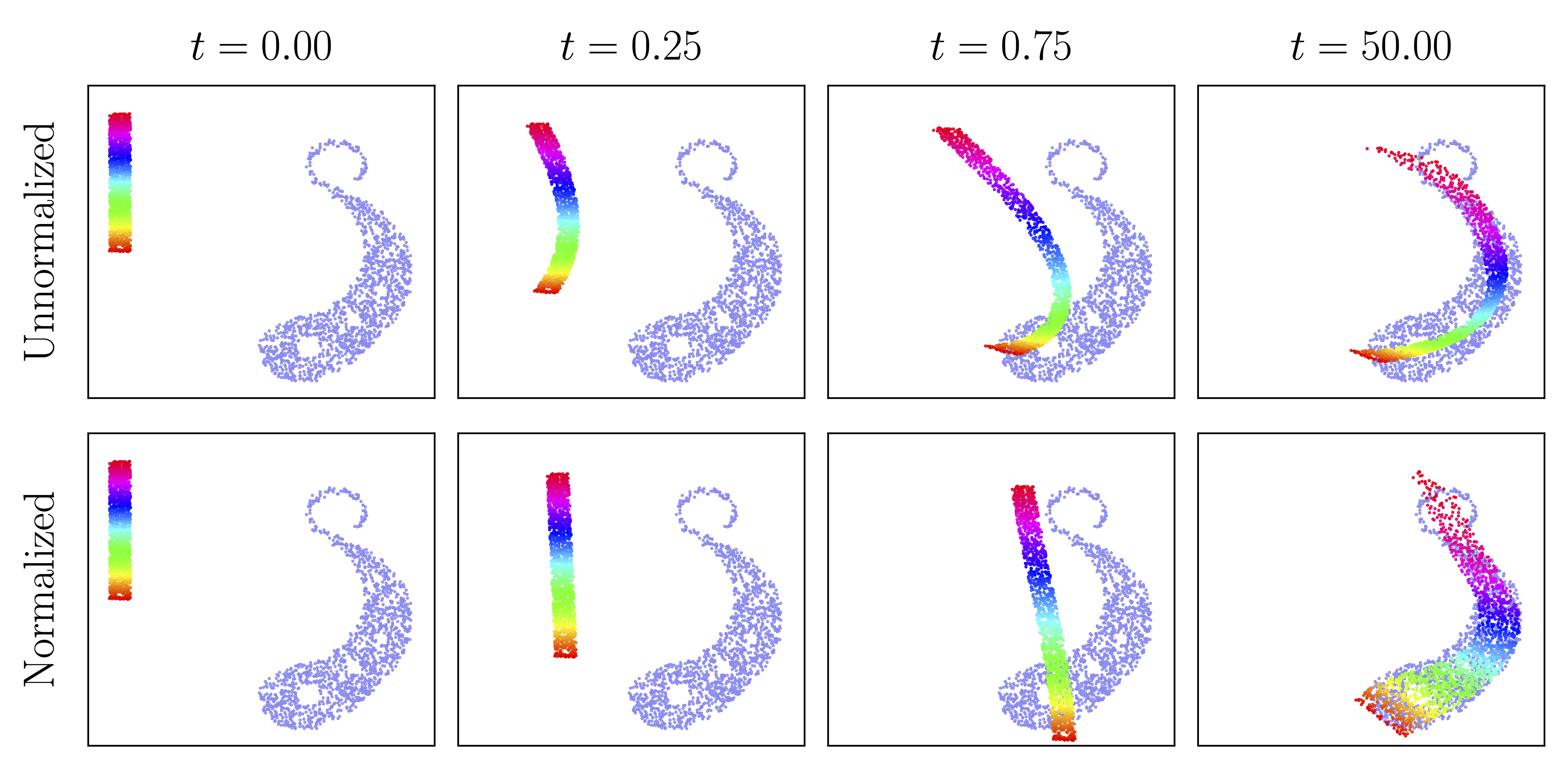}
        \caption{\label{fig:gradients_020} Gradient flow using unnormalized and normalized Gaussian kernels with $\sigma = 0.2$.}
    \end{minipage}
\end{figure}

\paragraph{Particle Flow.}
Starting from point positions $x_i^{(0)}$, we then update the point positions iteratively using:
\begin{equation}
    x_i^{(t+\eta)} \leftarrow x_i^{(t)} - \eta N L \nabla_{x_i^{(t)}}E\big(~\tfrac{1}{N}{\textstyle\sum_{i=1}^N} \,\delta_{x_i^{(t)}}\,,~ \nu ~\big)~,
\end{equation}
where $L$ is an arbitrary linear operator and $\eta$ is a positive step size.
When $L$ is the identity, this scheme corresponds to an explicit Euler integration of the Wasserstein gradient flow: \Cref{fig:gradients_classic} is equivalent to classical simulations such as the first row of Figure~5 in \citet{feydyInterpolatingOptimalTransport2019}.

\paragraph{Setup.}
Going further, we study the impact of different smoothing operators $L$ that act as regularizers on the displacement field.
We consider both unnormalized Gaussian kernel matrices and their normalized counterparts as choices for $L$, with standard deviation $\sigma = 0.07$ in Figure~4 of the main paper and \Cref{fig:gradients_007},
as well as $\sigma = 0.2$ in a secondary experiment showcased in \Cref{fig:gradients_020}. For each case, we run $T = 1{\,}000$ iterations with a step size $\eta = 0.05$.

Out of the box, unnormalized kernel matrices tend to aggregate many points and thus inflate gradients. To get comparable visualizations, we divide the unnormalized kernel matrix by the average of its row-wise sums at time $t=0$. This corresponds to an adjustment of the learning rate, which is not required for the descents with respect to the Wasserstein metric or with our normalized diffusion operators.

As a source distribution, we use a uniform sampling ($N = 1{\,}500$) of a small rectangle in the unit square $[0,1]^2$. The target distribution is also sampled with $M = 1{\,}500$ points using a reference image provided by the Geomloss library~\citep{feydyInterpolatingOptimalTransport2019}. 
The entire optimization process takes a few seconds on a GeForce RTX 3060 Mobile GPU using KeOps for kernel computation.

\paragraph{Visualization.}
In \Cref{fig:gradients_classic}, we show a baseline gradient descent for the Wasserstein metric ($L = I$). As expected, the gradient flow heavily deforms the source distribution and leaves ``stragglers'' behind due to the vanishing gradient of the Energy Distance.

In \Cref{fig:gradients_007}, we compare both kernel variants with $\sigma = 0.07$ as in the main paper. In \Cref{fig:gradients_020}, we use a larger standard deviation $\sigma = 0.2$. In this case, the unregularized flow is more stable but still tends to overly contract the shape. In contrast, our normalized kernel consistently preserves the structural integrity of the source distribution.

\paragraph{Quantitative Metric.} We also report in \Cref{fig:chamfer_distance} the Chamfer distance between the input and target distribution over time, with $\sigma = 0.07$ for unnormalized and normalized Gaussian kernels. On this Figure, we see that normalizing the kernel provides faster convergence and leads to a better optimum than standard or naively smoothed gradient.


\begin{figure}
 \centering
  \includegraphics[width=.5\linewidth]{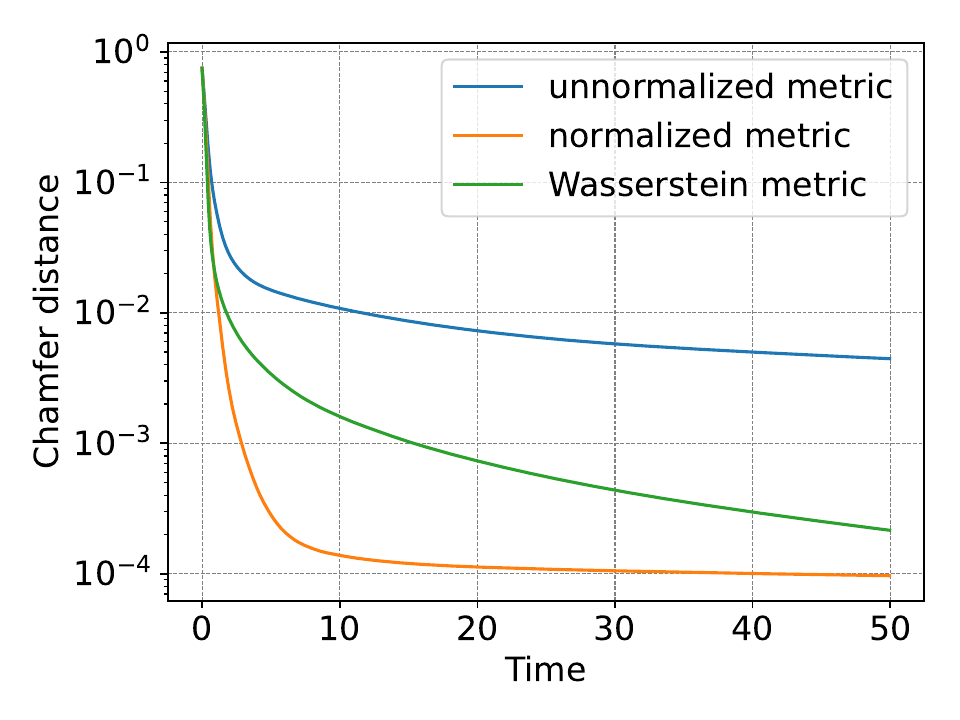}
  \caption{\label{fig:chamfer_distance}} Chamfer distance between input and target distribution over time.
\end{figure}

\section{Details on Normalized Shape Metrics}
\label{sec:shape_metrics}

\paragraph{Hamiltonian Geodesic Shooting.}
To compute our shape geodesics, we implement the LDDMM framework on point clouds as done by the Deformetrica software~\citep{bone2018deformetrica}.
If $(x_1, \dots, x_N)$ denotes the set of vertices of the source mesh in 3D, we define the standard Hamiltonian $H(q,p)$ on (position, momentum) pairs $\RR^{N\times 3}\times \RR^{N\times 3}$ with:
\begin{equation}
    H(q,p)~=~
    \tfrac{1}{2} \text{trace}(p^\top K_q p)
    ~=~
    \tfrac{1}{2} \sum_{i,j=1}^N (K_q)_{ij} \cdot p_i^\top p_j~,
\end{equation}
where $(K_q)_{ij} = \exp(-\|q_i-q_j\|^2/2\sigma^2)$ is a Gaussian kernel
matrix. Starting from a source position $q(t=0) = (x_1, \dots, x_N)$,
geodesic shape trajectories are parametrized by an initial momentum $p(t=0)$ and flow along the coupled geodesic equation in phase space:
\begin{align}
    \dot{q}(t)~&=~ +\frac{\partial H}{\partial p}(q(t), p(t))~,
    &
    \dot{p}(t)~&=~ -\frac{\partial H}{\partial q}(q(t), p(t))
    \label{eq:hamilton_geodesic}
\end{align}
that we integrate to time $t=1$ using an explicit Euler scheme with
a step size $\delta t = 0.1$. The partial derivatives of the Hamiltonian are computed automatically with PyTorch.

\begin{figure}
 \centering
  \includegraphics[width=.5\linewidth]{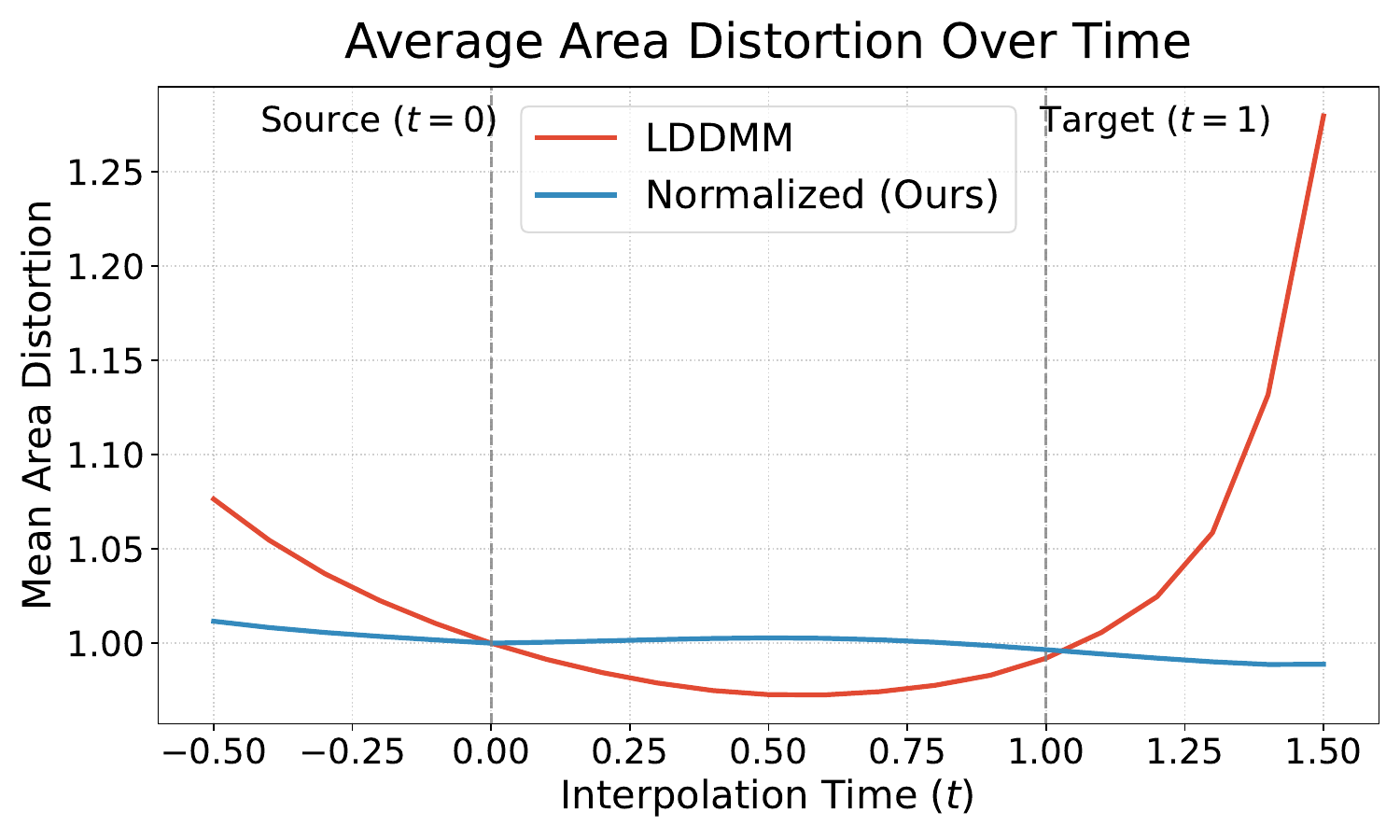}
  \caption{\label{fig:area distortion} Average face area distortion along the geodesic path, described by the time parameter. The reference area is the one of the source shape.}
\end{figure}

\begin{figure}
 \centering
  \includegraphics[width=\linewidth, trim={1.2cm 0 1.2cm 0}]{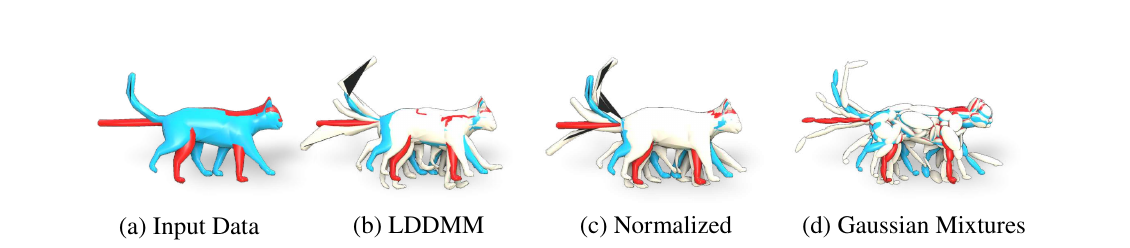}
  \caption{\label{fig:geodesic cat} \textbf{Pose interpolation and extrapolation of a cat mesh.} 
(a)~We interpolate between source ($t=0$, red) and target ($t=1$, blue) poses, and extrapolate to $t = -0.5$, $0.5$, and $1.5$. Note the defects in the target mesh.
(b)~The standard LDDMM geodesic with a Gaussian kernel ($\sigma = 0.1$) produces unrealistic extrapolations. 
(c)~Using our normalized diffusion yields a smoother, more plausible path. 
(d)~The method remains robust on coarse Gaussian-mixture inputs.}
\end{figure}

\paragraph{Shape Interpolation.}
In \Cref{fig:Geodesic}a, we display the source position $q(t=0) = (x_1,\dots,x_N)$ in red and a target configuration in blue.
In \Cref{fig:Geodesic}b, we use the L-BFGS algorithm to optimize
with respect to the initial shooting momentum $p(t=0)$ the mean squared error between the position of the geodesic $q(t=1)$ at time $t=1$ and the target configuration. Then, we use Eq.~(\ref{eq:hamilton_geodesic}) to sample the geodesic curve $q(t)$ at time $t=-0.5$, $t=0.5$ and $t=1.5$.

In \Cref{fig:Geodesic}c, we use the exact same implementation but normalize the Gaussian kernel matrix $K_q$ into a diffusion operator $Q_q$ before defining a ``normalized'' Hamiltonian $H(q,p)$:
\begin{equation}
    H(q,p)~=~
    \tfrac{1}{2} \text{trace}(p^\top Q_q p)~.
\end{equation}
For the sake of simplicity, we do not use the mesh connectivity information and rely instead on constant weights to define the mass matrix $M$. Although our Hamiltonian is now defined via the iterative Sinkhorn algorithm, automatic differentiation lets us perform geodesic shooting without problems.
Finally, in Figure~5d, we identify every Gaussian component $\mathcal{N}(x_i, \Sigma_i)$ with a cloud of 6 points sampled at $(x_i \pm s_{i,1} e_{i,1}, x_i \pm s_{i,2} e_{i,2}, x_i \pm s_{i,3} e_{i,3})$, where $e_{i,1}$, $e_{i,2}$ and $e_{i,3}$ denote the eigenvectors of $\Sigma_i$ with eigenvalues $s_{i,1}^2$, $s_{i,2}^2$ and $s_{i,3}^2$. This allows us to use the same underlying point cloud implementation.`

\paragraph{Quantitative Evaluation.} On \Cref{fig:area distortion}, we display the evolution of area distortion over time from the hand example in \Cref{fig:Geodesic}. We note that, as noted in~\citet{micheli2012sectional}, standard LDDMM framework favors contraction-expansion dynamics, where area shrinks between source and target, then explodes when extrapolating. In contrast, the normalized kernel keeps a stable area at all times, even during extrapolation.

\paragraph{Additional Example.} \Cref{fig:geodesic cat} displays a similar experiment as the one presented in \Cref{fig:Geodesic}, but applied to a pair of cat meshes from the Sumner dataset~\citep{sumnerDeformationTransferTriangle2004}. Note that the target mesh in blue has inverted triangles, which remain present in the extrapolated versions of the mesh. Focusing on the cat paws, we see again a strong regularizing effect of the kernel normalization on extrapolations.
\section{Details on Point Features Learning}
\label{sec:point_features_learning}

\begin{table}
    \caption{Shape matching performance using aligned XYZ coordinates as input features instead of WKS. This strictly point-based setting isolates the models from any implicit mesh information.}
    \label{tab:shapematching_xyz}
    \centering
\small
\setlength{\tabcolsep}{4pt} 
\renewcommand{\arraystretch}{1.1}
\begin{tabular}{clccc}
\toprule
 & Method & FAUST & SCAPE & SHREC \\
\midrule
\multirow{3}{*}{\rotatebox{90}{\textit{Points}}} 
 & DiffNet (PC)     & 7.00 & 7.45 & 9.35 \\
 & ULRSSM (PC)      & 6.89 & 5.60 & 10.5 \\
 & Q-DiffNet (Q-FM) & 6.91 & 6.56 & 7.68 \\
\bottomrule
\end{tabular}
\end{table}

\paragraph{Q-DiffNet.} DiffusionNet~\citep{sharpDiffusionNetDiscretizationAgnostic2022} is a powerful baseline for learning pointwise features on meshes and point clouds. It relies on two main components: a diffusion block and a gradient feature block. The diffusion block approximates heat diffusion spectrally rather than solving a sparse linear system. In Q-DiffNet, we replace this truncated spectral diffusion with our normalized diffusion operator, using a Gaussian convolution as the original smoothing operator.
Given a shape $\Ss$ with vertices $X\in\RR^{N\times 3}$, features $f\in\RR^{N\times P}$ and a diagonal mass matrix $M$, we define:
\begin{align}
\text{Q-Diff}(f, X, M; \sigma) = Q(X, M, \sigma)^m f \quad\text{where}\quad
Q(X, M, \sigma) = \Lambda_\sigma K_\sigma(X,X) M \Lambda_\sigma~.
\end{align}
In the above equation, $K_\sigma$ is a Gaussian kernel of standard deviation $\sigma$, $\Lambda_\sigma$ is the diagonal scaling matrix computed from Algorithm~1, and $m$ is the number of application of the operator.
The scaling $\Lambda_\sigma$ is recomputed in real time at each forward pass using 10 iteration of Algorithm~1, without backpropagation. Repeating the operator $m$ times allows for simulating longer diffusion times. In practice, we use $m=2$.

Like DiffusionNet~\citep{sharpDiffusionNetDiscretizationAgnostic2022}, Q-DiffNet supports multi-scale diffusion: the layer takes input features of shape $B\times C \times N \times P$ and applies separate diffusion per channel, using learnable scales $(\sigma_c)_{c=1}^C$. In DiffusionNet, typical values are $C=256$, $P=1$.
For speed efficiency, we use $C=32$, $P=8$ in our experiments, which preserves the total amount of features in the network.
 
\paragraph{Architecture Integration.}
We integrate Q-DiffNet into the ULRSSM pipeline~\citep{caoUnsupervisedLearningRobust2023}, which is designed for unsupervised 3D shape correspondence. 
This framework trains a single network $\Nn_\Theta$, usually DiffusionNet~\citep{sharpDiffusionNetDiscretizationAgnostic2022}, that outputs pointwise features for any input shape $\Ss$. Inputs to the network typically consist of spectral descriptors such as WKS~\citep{aubryWaveKernelSignature2011} or HKS~\citep{sunConciseProvablyInformative2009}, where WKS is generally preferred. Raw 3D coordinates ($xyz$) are also used in some settings.

\paragraph{Point Cloud Inputs.}
Although DiffusionNet~\citep{sharpDiffusionNetDiscretizationAgnostic2022} can operate on point clouds since it does not require mesh connectivity, it still depends on (approximate) Laplacian eigenvectors~\citep{sharpLaplacianNonmanifoldTriangle2020}. When working with point clouds, spectral descriptors like WKS~\citep{aubryWaveKernelSignature2011} change because they rely on the underlying Laplacian eigendecomposition.
To isolate the effect of our proposed Q-operator, we minimize variability across experiments and use WKS descriptors~\citep{aubryWaveKernelSignature2011} computed from the mesh-based Laplacian for all experiments, even when retraining DiffusionNet~\citep{sharpDiffusionNetDiscretizationAgnostic2022} or ULRSSM~\citep{caoUnsupervisedLearningRobust2023} on point clouds. This ensures consistent inputs across surface and point-based variants.

\paragraph{Ablation on Input Features (XYZ).}
While using WKS inputs isolates the effect of the diffusion operator, we acknowledge that it propagates implicit mesh information into the network. To evaluate the methods in a strictly point-based setting, we design an additional experiment using aligned spatial coordinates (XYZ) as input features. As shown in Table~\ref{tab:shapematching_xyz}, replacing WKS with raw XYZ coordinates significantly degrades performance across all methods, which shows that well engineered input features heavily drive performance in current benchmarks. In that scenario Q-DiffNet remains highly competitive and achieves the best performance on SHREC19, the most challenging dataset. This confirms that Q-DiffNet can efficiently process unstructured point cloud data.

\paragraph{Loss Functions.}
The ULRSSM pipeline uses Laplacian eigenvectors during training to compute functional maps $C_{12}$ and $C_{21}$ from the predicted pointwise features $f_1$ and $f_2$~\citep{donatiDeepGeometricFunctional2020, sharpDiffusionNetDiscretizationAgnostic2022,caoUnsupervisedLearningRobust2023}. These maps are fed in the original ULRSSM losses: orthogonality, bijectivity, and alignment~\citep{caoUnsupervisedLearningRobust2023}. For consistency, we retain these losses unchanged. Mesh-based models use ground-truth mesh eigenvectors, point cloud versions use approximate point cloud eigenvectors. Our Q-DiffNet model uses mesh eigenvectors, while the Q-DiffNet (Q-FM) variant uses eigenvectors derived from our normalized diffusion operator.

\paragraph{Dataset.} 
We train on the standard aggregation of the remeshed FAUST~\citep{bogoFAUSTDatasetEvaluation2014, renContinuousOrientationpreservingCorrespondences2019} and SCAPE datasets~\citep{anguelovSCAPEShapeCompletion2005}, using only intra-dataset pairs within the training split. We follow the standard train/test splits used in prior baselines~\citep{donatiDeepGeometricFunctional2020, sharpDiffusionNetDiscretizationAgnostic2022,caoUnsupervisedLearningRobust2023}. The 44 shapes from the remeshed SHREC dataset~\citep{melziMatchingHumansDifferent2019} are reserved exclusively for evaluation.

\paragraph{Training.} We use the exact ULRSSM setup~\citep{caoUnsupervisedLearningRobust2023}, where we train the network for 5 epochs with a batch size of $1$, using Adam optimizer with an initial learning rate of $10^{-3}$ and cosine annealing down to $10^{-4}$. Training takes 6h on a single V100 GPU.

\end{document}